\documentclass[a4paper,11pt]{article}

\usepackage[margin=1in]{geometry}
\usepackage{amsmath,amssymb,amsthm,mathtools,bm}
\usepackage{booktabs,array,graphicx}
\usepackage{microtype,authblk}
\usepackage{xcolor}
\usepackage{enumitem}
\usepackage{hyperref}
\usepackage[nameinlink,capitalize,noabbrev]{cleveref}
\usepackage{algorithm}
\usepackage{algpseudocode}
\usepackage{pgfplots}
\pgfplotsset{compat=1.18}

\hypersetup{
  colorlinks=true,
  linkcolor=blue!55!black,
  citecolor=blue!55!black,
  urlcolor=blue!55!black,
  pdftitle={BRANCH-MoE: Balance-Aware Tree Routing for Large Embedding Models}
}
\usepackage{natbib}

\newtheorem{theorem}{Theorem}[section]
\newtheorem{lemma}[theorem]{Lemma}
\newtheorem{proposition}[theorem]{Proposition}
\newtheorem{corollary}[theorem]{Corollary}
\newtheorem{assumption}[theorem]{Assumption}
\crefname{assumption}{Assumption}{Assumptions}
\Crefname{assumption}{Assumption}{Assumptions}
\theoremstyle{definition}

\newcommand{\E}{\mathbb{E}}
\newcommand{\Pp}{\mathbb{P}}
\newcommand{\R}{\mathbb{R}}
\newcommand{\one}{\mathbf{1}}
\newcommand{\cJ}{\mathcal{J}}
\newcommand{\cE}{\mathcal{E}}

\newcommand{\pathset}{\operatorname{path}}

\newcolumntype{L}[1]{>{\raggedright\arraybackslash}p{#1}}
\newcommand{\tabsd}[1]{{\scriptsize$\pm$#1}}
\newcommand{\tabmap}[1]{\,{\tiny#1}}

\title{BRANCH-MoE: Balance-Aware Tree Routing for Large Embedding Models}

\author[1]{Gang Fu}
\author[1,2]{Adel Javanmard}
\author[1]{MohammadHossein Bateni}
\author[1]{Vahab Mirrokni}
\affil[1]{Google Research, \texttt{\{thomasfu,bateni,mirrokni\}@google.com}}
\affil[2]{University of Southern California, \texttt{ajavanma@usc.edu}}
\date{}

\newcommand{\branch}{BRANCH-MoE}

\begin{document}
\maketitle

\begin{abstract}
Mixture-of-experts (MoE) layers increase model capacity without a proportional increase in per-example computation. However, conventional flat routers can yield imbalanced expert utilization and treat experts as an unstructured collection, whose indices carry no topological meaning. We introduce {\bf BRANCH-MoE}, a routing architecture that places \(E\) experts at the leaves of a binary decision tree of depth \(\log_2 E\).
At each internal node the branching
probability is centered on the arrival-weighted mean score of the traffic
 reaching that node. This mean is estimated using an exponential moving average, which promotes utilization of both child subtrees without an auxiliary load-balancing loss. We show that this moving-average estimate admits an explicit noise–lag trade-off. We prove that for linear node maps and log-concave arrival distributions, this mechanism prevents routing-mass collapse. We further establish that, under a frozen router, an expert’s execution frequency controls its stochastic-gradient convergence rate, and that confident decisions near the root bound cross-device communication when experts are assigned to devices by tree prefix. We evaluate \branch{} against Switch softmax, DeepSeek-V3 dynamic-bias, Skywork logit-normalized, and deterministic hash routing on Criteo click-through-rate prediction, Forest Covertype, HIGGS, and YearPredictionMSD, using \(E=16\), top-\(4\) routing, and five random seeds. \branch{} matches the predictive performance of the strongest baselines, attaining the lowest mean held-out loss on Forest Covertype and showing no statistically significant difference from the best-performing methods on Criteo, HIGGS, or YearPredictionMSD across random seeds. It also maintains balanced expert load without an auxiliary penalty. Unlike flat routers, \branch{} induces input-conditional co-routing locality: for each input token, the experts selected together tend to lie closer to one another in the routing tree. On all three UCI benchmarks, linear node maps yield a normalized tree distance of \(0.710\)–\(0.718\) between co-selected experts, compared with \(0.817\) for random expert pairs—a reduction of \(12\)–\(13\%\). Every flat-routing baseline remains within \(3.5\%\) of the random-pair distance. Together, these results show that hierarchical routing can preserve task quality and balanced utilization while inducing a topology that supports localized expert co-activation and reduced communication.
\end{abstract}

\section{Introduction}

Mixture-of-experts (MoE; \citealp{jacobs1991adaptive}) layers increase
parameter capacity while activating
only a small subset of expert networks for each token
\citep{shazeer2017outrageously,lepikhin2020gshard,fedus2022switch}. A standard
token-choice layer uses a flat $E$-way router followed by top-$K$ selection.
This design has enabled very large sparse models, but it leaves several
properties to losses, constraints, or systems heuristics rather than building
them into the routing geometry.
\medskip

\noindent\textbf{Limitations of flat routing.}
First, a flat softmax or top-$K$ gate provides no distribution-free positive
lower bound on the mass assigned to every expert: small initial routing
advantages can produce highly unequal update frequencies, and an underused
expert, receiving little data, can improve slowly
\citep{eigen2013learning,shazeer2017outrageously}. Auxiliary
load-balancing losses, capacity constraints, balanced assignment, and
expert-choice routing mitigate this problem
\citep{shazeer2017outrageously,fedus2022switch,lewis2021base,zhou2022expertchoice},
but the protection does not follow from the geometry of a flat gate. Second, a
flat expert label is an unstructured categorical index that does not encode
which experts share a device, node, rack, or other hardware cut, so the router
offers no prefix locality for a distributed implementation to exploit.
Third, preventing early routing collapse requires an auxiliary
load-balancing penalty that competes with the task objective or an external
bias controller
\citep{wang2024auxiliary,deepseekai2024deepseekv3}.
We do not claim that every conventional MoE collapses or
incurs the same communication cost, only that these guarantees do not arise
from flat routing alone.

Our proposal, BRANCH-MoE, changes the routing architecture itself. Its novelty
is the combination of three coupled mechanisms within one routing tree:
(i)~a depth-$D$ binary routing tree for $E=2^D$ experts; (ii)~EMA score
anchors and induced offsets that center every routing decision on the tokens
arriving at its node; and (iii)~a routing-mask cascade whose prefixes can serve
as hierarchical expert or device addresses. The cascade determines the
incoming mask used to update each node's anchor, the anchor determines the
routing offset, and the final mask selects the experts each token is
dispatched to, with the tree path giving each leaf a structured binary address
rather than an arbitrary flat label.
\medskip

\noindent\textbf{Our contributions.} We summarize our contributions as follows:
\begin{enumerate}[leftmargin=*,itemsep=1pt,topsep=2pt]
  \item \textbf{Architecture.} BRANCH-MoE couples a soft mask cascade with one
  scalar EMA score anchor per node, updated with mask weights and a single
  shared decay \(\alpha\) (\cref{sec:architecture}). The anchor induces the
  inference-time bias \(b_j=-\nu_j\), so no auxiliary balancing loss,
  separately learned bias, or \(d\)-dimensional centroid is needed; the full
  soft router evaluates \(E-1\) internal nodes in \(O(\log_2 E)\) sequential
  depth, and inference uses the stored anchors as fixed biases without EMA
  updates.
  \item \textbf{Theory} (\cref{sec:theory}). (1)~The mask-weighted
  constant-decay EMA tracks each node's incoming mean score with an explicit
  noise--lag trade-off. (2)~For linear routing under log-concave arrival
  distributions, anchoring prevents vanishing expected branch mass and yields a
  positive soft leaf-mass bound that degrades quantitatively with anchor error.
  (3)~Under a frozen router and strongly convex conditional expert risks, the
  realized expert-execution frequency controls the convergence rate of expert
  SGD (the soft-mass bound does not itself guarantee top-\(K\) execution).
  (4)~We formalize the communication locality induced by tree routing: a
  token's experts are determined by its path, so co-routed tokens share path
  prefixes and confident upper-level decisions concentrate a token's routing
  mass within a shared-prefix subtree; under tree-aligned expert placement, the
  depth at which two paths diverge bounds whether their dispatches cross a
  hardware boundary, giving explicit bounds on cross-domain communication
  (\cref{sec:communication-locality}).
  \item \textbf{Experiments.} We compare \branch{} against flat softmax (Switch),
  auxiliary-loss-free (DeepSeek-V3), logit-normalized
  (Skywork; \citealp{wei2024skywork}), and
  deterministic hash routing on four tabular benchmarks, namely Criteo
  click-through-rate prediction~\citep{criteo_1tb_click_logs}, Forest Covertype \citep{blackard1999covertype,blackard1998covertype_uci}, HIGGS \citep{baldi2014searching,whiteson2014higgs_uci}, and
  YearPredictionMSD \citep{bertinmahieux2011million,bertinmahieux2011yearpredictionmsd_uci} (see Appendix~\ref{sec:datasets} for description of datasets). As experiments show BRANCH-MoE matches the
  strongest baselines, with the lowest mean held-out loss on Forest Covertype
  and ties within seed noise on Criteo, HIGGS, and MSD. On all three UCI
  benchmarks, it is the \emph{only} router whose co-routing statistics fall
  materially below the analytical random-pair baseline, reducing normalized
  tree distance to $0.710$--$0.718$ against $0.817$ ($12$--$13\%$ lower) and
  the leaf index gap by $18$--$23\%$, while every flat router stays within
  $3.5\%$ of random (\cref{tab:main}).
\end{enumerate}
\vspace{-0.3cm}

\section{Background and Related Work}
\label{app:related}

We first recall the flat
token-choice router that serves as the reference template.
Let $\{F_e(\cdot;\theta_e)\}_{e=1}^E$ be expert networks, where
$\theta_e$ collects the trainable parameters of expert $e$, and let
$W_g\in\R^{d_{in}\times E}$ be a flat gating matrix. A standard token-choice router
forms the score vector $s=x^\top W_g$, whose component
$s_f=x^\top(W_g)_{:,f}$ is the score of expert $f$, and computes
\begin{equation}
  G(x)=\operatorname{softmax}(x^\top W_g).
  \label{eq:flat-softmax}
\end{equation}
After retaining the $K$ largest components, the routed output is
\begin{equation}
  y(x)
  =
  \sum_{e=1}^E
  \operatorname{TopK}(G(x))_e F_e(x).
  \label{eq:flat-output}
\end{equation}

Prior work modifies different aspects of this template to improve sparsity,
load balance, or routing stability. Switch Transformer specializes it to
top-$1$ routing and combines an auxiliary load-balancing loss with fixed expert
capacities \citep{fedus2022switch}. DeepSeek-V3 instead adjusts expert selection
through a dynamic per-expert bias: the bias increases for underloaded experts
and decreases for overloaded experts, while the gate multiplying the expert
output is computed from the original, unbiased affinity score
\citep{deepseekai2024deepseekv3}. Its primary balancing mechanism is therefore auxiliary-loss-free. However, its implementation includes a complementary sequence-wise balancing loss to encourage an even distribution of expert load within each sequence. BRANCH-MoE also uses routing biases, but for a different purpose. Its
bias
\(
  b_j=-\nu_j
\)
is the negated EMA of the arrival-weighted mean score at node $j$, and anchors
that node's decision boundary at the center of its own incoming score
distribution; it is not a load-error feedback variable.

Other methods alter the assignment rule more fundamentally. BASE constructs a
balanced batch assignment through a matching problem
\citep{lewis2021base}, while Sinkhorn-BASE approximates such an assignment using
entropy-regularized optimal transport and Sinkhorn iterations
\citep{clark2022unified}. Expert Choice reverses the usual assignment direction:
each expert selects a fixed-size bucket of high-scoring tokens, controlling
expert workloads while allowing different tokens to be processed by different
numbers of experts \citep{zhou2022expertchoice}. Hash Layers remove the learned
router and assign inputs to experts using a fixed input-dependent hash, thereby
avoiding router parameters, balancing losses, and batchwise matching
\citep{roller2021hash}. These methods respectively impose balance through
batch-coupled assignment, expert-side capacity control, or a fixed routing
rule. BRANCH-MoE instead retains learned, representation-dependent token
routing without solving a globally coupled assignment problem.

BRANCH-MoE is closely related to hierarchical conditional-computation architectures, including hierarchical mixtures of experts \citep{jordan1994hierarchical,shazeer2017outrageously}, soft decision trees \citep{irsoy2012soft,frosst2017soft}, tree-based sparse gating mechanisms \citep{ibrahim2023comet}, and fast feedforward networks \citep{belcak2023fast}. Hierarchical mixtures of experts arrange routing decisions in a tree structure \citep{jordan1994hierarchical}. Similarly, the soft decision tree of Frosst and Hinton uses learned sigmoid splits and path probabilities to distill a neural network into a tree whose leaves represent class distributions \citep{frosst2017soft}. BRANCH-MoE employs the hierarchy differently: its
soft, fully differentiable branch decisions route tokens to trainable neural
experts,
and the same cascaded masks used for routing also determine the tokens used to
update each node's score anchor.

Notably, BRANCH-MoE does not enforce exact per-batch load equality, as do assignment-
constrained methods. Instead, under the conditions stated in our analysis, its
anchored hierarchy provides a population-level nonstarvation
guarantee while preserving adaptive sparse routing. The resulting tree paths
also give experts hierarchical addresses, complementing systems work on token
permutation, distributed communication, and topology-aware routing
\citep{rajbhandari2022deepspeed,hwang2022tutel,he2022fastermoe,chen2022tamoe}.
Notably, prior topology-aware models generally rely on post-hoc routing penalties, dynamic replication, or systems-level scheduling to minimize cross-node traffic, whereas BRANCH-MoE embeds topological locality intrinsically into the geometric structure of the router.

\section{Architecture: BRANCH-MoE}
\label{sec:architecture}

\begin{figure}[t]
\centering
\includegraphics[width=\textwidth]{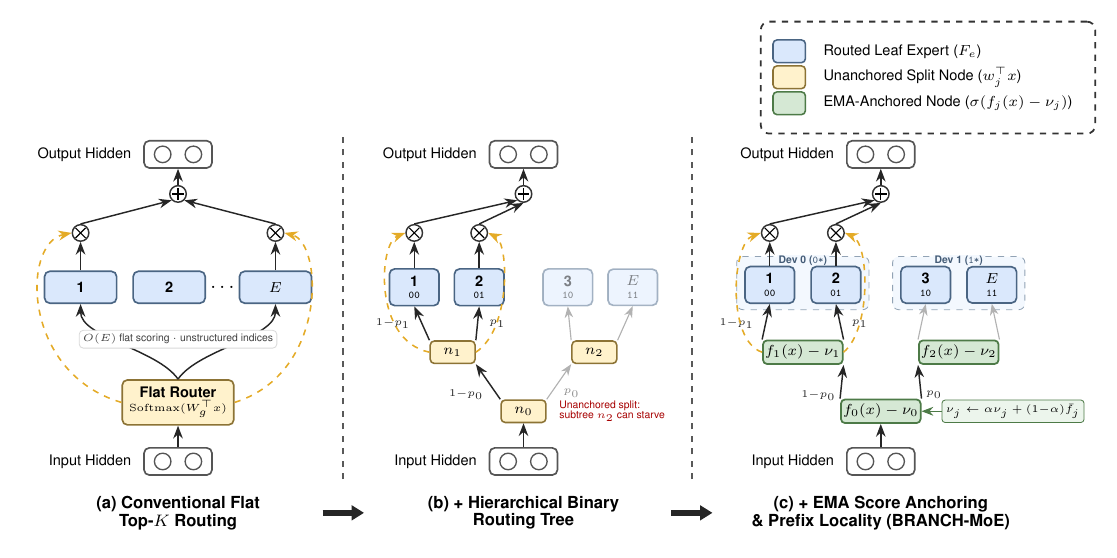}
\caption{Architectural progression from conventional flat routing to \branch{}.
\textbf{(a)} A flat Top-$K$ router scores all $E$ experts independently in
$O(E)$ work and treats expert indices as unstructured labels.
\textbf{(b)} A depth-$D$ binary routing tree with the experts as leaves gives
$O(\log_2 E)$ sequential routing depth and a structured root-to-leaf binary
address per leaf (\texttt{00}, \texttt{01}, \dots), but unanchored internal
splits can starve entire subtrees; the full soft distribution over the leaves
still requires evaluating all $E-1$ internal nodes.
\textbf{(c)} \branch{} centers each internal split $j$ on its arrival-weighted
EMA score anchor ($p_j = \sigma(f_j(x) - \nu_j)$), supporting non-starvation
under the conditions of \cref{sec:score-anticollapse} without an auxiliary
load-balancing loss and inducing prefix-aligned co-routing locality within
hardware devices.}
\label{fig:branch_overview}
\end{figure}

\Cref{fig:branch_overview} illustrates the architecture, a self-balancing
routing tree whose leaves are the experts; \cref{tab:notation}
 summarizes the notation. Let $\mathcal T$ be a complete
binary tree of depth $D$ with internal-node set $\cJ$ and leaf set $\cE$; an
internal node $(d,j)$ has depth $d\in\{0,\ldots,D-1\}$ and index
$j\in\{0,\ldots,2^d-1\}$ within that depth, and for any nonroot node or leaf
$v$, $\pathset(v)$ denotes the internal nodes strictly preceding $v$ on its
root-to-$v$ path. To avoid overloading the depth index, $d_{\rm in}$ denotes
the hidden dimension.

\begin{table}[h]
\centering
\caption{Principal notation.}
\label{tab:notation}
\small
\begin{tabular}{@{}L{0.25\linewidth}L{0.68\linewidth}@{}}
\toprule
Symbol & Definition\\
\midrule
$B,d_{in}$ & Number of resident tokens and hidden dimension\\
$D,\ E=2^D$ & Routing-tree depth and number of leaf experts\\
$\mathcal T,\cJ,\cE$ & Binary tree, internal nodes, and leaves\\
$W\in\R^{(2^D-1)\times d_{in}}$ & Matrix of learned node directions\\
$\nu_j,\ b_j=-\nu_j$ & Node score anchor and induced routing bias\\
$\bar x_j$ & Arrival-weighted centroid of the tokens reaching node $j$\\
$P^{(d)}$ & Routing probabilities at tree depth $d$\\
$M^{(d)}\in\R^{B\times2^d}$ & Incoming routing masks at depth $d$\\
$M_{\rm experts}=M^{(D)}$ & Final expert routing mask\\
$X\in\R^{B\times d_{in}}$ & Input token representations\\
$\alpha\in(0,1)$ & Single EMA decay used for anchor tracking\\
$\mathcal S_K$ & Top-$K$ expert set selected for a token\\
$F_e(\cdot;\theta_e),\theta_e$ & Expert $e$ and its trainable parameter vector\\
\bottomrule
\end{tabular}
\end{table}

\subsection{Phase 1: adaptive geometric anchoring}
\label{sec:phase1}
Each internal node $(d,j)$ carries a trainable scalar-valued \emph{node map}
$f_{j,t}^{(d)}:\R^{d_{\rm in}}\to\R$, evaluated at training step $t$, and a
scalar \emph{anchor} $\nu_{j,t}^{(d)}\in\R$. The parameters of $f_{j,t}^{(d)}$
are optimized by ordinary backpropagation through the routing probabilities,
whereas $\nu_{j,t}^{(d)}$ is a non-gradient state variable maintained by the
EMA \cref{eq:logit-anchor-ema}; there is no independently learned routing bias
or additional scalar calibration variable, since the anchor plays that role
and is refreshed whenever the node statistics change.
\medskip

\noindent\textbf{Node maps: linear and nonlinear projections.}
We instantiate $f_{j,t}^{(d)}$ either as a \emph{linear} projection with a
routing direction $W_{j,t}^{(d)}\in\R^{d_{\rm in}}$ and no bias term, or as a
\emph{nonlinear} per-node two-layer MLP of hidden width $h$:
\begin{align}
  f_{j,t}^{(d)}(x)&=\bigl(W_{j,t}^{(d)}\bigr)^\top x,
  \label{eq:node-map-linear}\\
  f_{j,t}^{(d)}(x)
  &=\bigl(v_{j,t}^{(d)}\bigr)^\top
  \varphi\!\left(U_{j,t}^{(d)}x+c_{j,t}^{(d)}\right),
  \qquad
  U_{j,t}^{(d)}\in\R^{h\times d_{\rm in}},\quad
  v_{j,t}^{(d)},c_{j,t}^{(d)}\in\R^{h},
  \label{eq:node-map-mlp}
\end{align}
with $\varphi$ applied elementwise (ReLU in our experiments); the matrix
$W_t\in\R^{(E-1)\times d_{\rm in}}$ stacks the $\bigl(W_{j,t}^{(d)}\bigr)^\top$
as its rows. Every node owns a private MLP whose output layer carries no bias,
since anchoring removes any additive constant; taking $\varphi=\mathrm{id}$ and
$h=1$ recovers \cref{eq:node-map-linear} up to reparameterization, so linear
routing is the degenerate case of nonlinear routing. The added cost is
$O(hd_{\rm in})$ parameters per node with $h$ small ($h\in\{2,4,8\}$ here),
and the router can still be implemented as a fused batched computation.
\medskip

\noindent\textbf{Arrival-weighted anchoring.}
The anchor statistics and routing masks are constructed jointly in a
root-to-leaf pass. At training step $t$, let $m_{j,t}^{(d)}(x)$ denote the
probability that token $x$ reaches node $(d,j)$, and collect the depth-$d$
masks into
\(
  m_t^{(d)}(x)
  :=
  \bigl(
    m_{0,t}^{(d)}(x),\ldots,
    m_{2^d-1,t}^{(d)}(x)
  \bigr)^\top
  \in\mathbb R^{2^d}
\)
(abbreviated $m^{(d)}(x)$ when the training index is clear); every token
reaches the root with unit mass, so \(m_{0,t}^{(0)}(x)=1\). Given the depth-$d$
masks and a batch $X_t=(x_1,\ldots,x_B)$, the normalized \emph{arrival weight}
of token $x_b$ at node $(d,j)$, the batch \emph{anchor statistic} (the
arrival-weighted mean of the node scores), and the persistent single-decay EMA
anchor are
\begin{align}
  \omega_{j,t}^{(d)}(x_b)
  &:=
  \frac{m_{j,t}^{(d)}(x_b)}
  {\sum_{b'=1}^{B}m_{j,t}^{(d)}(x_{b'})},
  \label{eq:node-arrival-weights}\\
  \widehat\nu_{j,t}^{(d)}
  &:=
  \sum_{b=1}^{B}
  \omega_{j,t}^{(d)}(x_b)\,
  f_{j,t}^{(d)}(x_b),
  \label{eq:logit-anchor-batch}\\
  \nu_{j,t+1}^{(d)}
  &=
  \alpha_t\nu_{j,t}^{(d)}
  +(1-\alpha_t)\widehat\nu_{j,t}^{(d)},
  \qquad
  \alpha_t
  =
  \min(\alpha,\tfrac{t-1}{t}),
  \qquad t\ge1,
  \label{eq:logit-anchor-ema}
\end{align}
where $\alpha\in(0,1)$ is the architectural decay rate and the cap $(t-1)/t$
is a warmup correction that removes initialization bias from the first
updates. Every token thus contributes to every node's anchor in proportion to
the routing mass reaching that node, so the statistic at $(d,j)$ is dominated
by the token population selected by its ancestors; at the root, all arrival
weights equal $1/B$ and $\widehat\nu_{0,t}^{(0)}$ is the ordinary batch mean
of the root scores. Following the batch-normalization convention, the active
anchor differs between training and inference; the centered node logit and
its soft right-branch probability are
\begin{align}
  a_{j,t}^{(d)}
  &:=
  \begin{cases}
    \widehat\nu_{j,t}^{(d)}, & \text{during training},\\[2pt]
    \nu_{j,t}^{(d)}, & \text{during inference},
  \end{cases}
  \label{eq:active-anchor}\\
  z_{j,t}^{(d)}(x)
  &=
  f_{j,t}^{(d)}(x)-a_{j,t}^{(d)},
  \qquad
  p_{j,t}^{(d)}(x)
  =
  \sigma\!\left(z_{j,t}^{(d)}(x)\right),
  \label{eq:centered-logit}
\end{align}
with $\sigma(\cdot)$ the sigmoid function. The live batch statistic makes
training-time centering responsive to changes in the upstream representations,
while the stored EMA makes inference deterministic and independent of batch
composition.

\medskip

\noindent\textbf{Hierarchical mask propagation.}
The incoming masks for depth $d+1$ follow from the branch probabilities at
depth $d$:
\begin{align}
  m_{2j,t}^{(d+1)}(x)
  &=
  m_{j,t}^{(d)}(x)
  \bigl(1-p_{j,t}^{(d)}(x)\bigr),
  \nonumber\\
  m_{2j+1,t}^{(d+1)}(x)
  &=
  m_{j,t}^{(d)}(x)
  p_{j,t}^{(d)}(x),
  \qquad
  j=0,\ldots,2^d-1.
  \label{eq:cascade}
\end{align}
Consequently, $m_{j,t}^{(d)}(x)$ is the product of the branch probabilities
along the unique root-to-node path ending at $(d,j)$. For finite logits, every
sigmoid probability lies strictly between zero and one, so
\(m_{j,t}^{(d)}(x)\in(0,1]\), with equality only at the root: in exact
arithmetic, soft routing sends every token to every node with strictly
positive mass rather than assigning it to a single subtree. The cascade also
preserves total mass, \(\sum_{j=0}^{2^d-1}m_{j,t}^{(d)}(x)=1\) for every depth
$d$, by induction from $m_{0,t}^{(0)}(x)=1$, as the two children of $(d,j)$
receive masses summing to $m_{j,t}^{(d)}(x)$. 

For the batch $X_t=(x_1,\ldots,x_B)$, define
\begin{equation}
  [M_t^{(d)}]_{b,j}
  :=
  m_{j,t}^{(d)}(x_b),
  \qquad
  [P_t^{(d)}]_{b,j}
  :=
  p_{j,t}^{(d)}(x_b),
  \label{eq:batch-mask-definition}
\end{equation}
so that
\[
  M_t^{(d)},P_t^{(d)}
  \in\mathbb R^{B\times 2^d},
  \qquad
  M_t^{(0)}=\one_B.
\]
The dimensionally explicit batch form of \cref{eq:cascade} is
\begin{align}
  [M_t^{(d+1)}]_{:,2j}
  &=
  [M_t^{(d)}]_{:,j}
  \odot
  \bigl(1-[P_t^{(d)}]_{:,j}\bigr),
  \nonumber\\
  [M_t^{(d+1)}]_{:,2j+1}
  &=
  [M_t^{(d)}]_{:,j}
  \odot
  [P_t^{(d)}]_{:,j},
  \qquad
  j=0,\ldots,2^d-1.
  \label{eq:batch-cascade}
\end{align}
Equivalently, after applying the fixed column permutation that groups the left children together and the right children together,
\begin{equation}
  M_t^{(d+1)}
  =
  \left[
    M_t^{(d)}\odot(1-P_t^{(d)})
    \ \middle\|\
    M_t^{(d)}\odot P_t^{(d)}
  \right].
  \label{eq:batch-cascade-concatenated}
\end{equation}

The denominator in \cref{eq:node-arrival-weights} is strictly positive since the probabilities are strictly positive. In reduced precision, however,
a depth-$d$ mask is a product of $d$ sigmoid factors and may underflow to
zero. The implementation therefore clamps the denominator away from zero and
skips the corresponding EMA update if the pre-clamp arrival mass vanishes,
leaving $\nu_{j,t}^{(d)}$ unchanged. This is a numerical safeguard rather than
a modification of the routing rule.
\medskip

\noindent\textbf{What is anchored, and in which space.}
Substituting \cref{eq:logit-anchor-batch} into the training branch of
\cref{eq:centered-logit} gives
\begin{equation}
  \sum_{b=1}^{B}
  \omega_{j,t}^{(d)}(x_b)\,
  z_{j,t}^{(d)}(x_b)
  =0.
  \label{eq:zero-mean-logit}
\end{equation}
This identity holds for any node map $f_{j,t}^{(d)}$, whether linear or
nonlinear: each node centers its scores relative to the population that
actually reaches it, so any routing imbalance must arise from variation of the
scores around this population-specific mean rather than from a shared offset
in the incoming representations. For the linear node map
\[
  f_{j,t}^{(d)}(x)
  =
  \bigl(W_{j,t}^{(d)}\bigr)^\top x,
\]
define the arrival-weighted centroid
\begin{equation}
  \bar x_{j,t}^{(d)}
  :=
  \sum_{b=1}^{B}
  \omega_{j,t}^{(d)}(x_b)x_b.
  \label{eq:arrival-weighted-centroid}
\end{equation}
Linearity gives
\begin{equation}
  \widehat\nu_{j,t}^{(d)}
  =
  \bigl(W_{j,t}^{(d)}\bigr)^\top
  \bar x_{j,t}^{(d)},
  \label{eq:linear-anchor-equivalence}
\end{equation}
and hence the training-time logit becomes
\begin{equation}
  z_{j,t}^{(d)}(x)
  =
  \bigl(W_{j,t}^{(d)}\bigr)^\top
  \bigl(x-\bar x_{j,t}^{(d)}\bigr).
  \label{eq:linear-centered-logit}
\end{equation}
Thus, during training, the separator is the hyperplane with normal
$W_{j,t}^{(d)}$ passing through the centroid of its incoming population.
During inference,
\[
  z_{j,t}^{(d)}(x)
  =
  \bigl(W_{j,t}^{(d)}\bigr)^\top x-\nu_{j,t}^{(d)},
\]
so the stored anchor induces the scalar bias
\begin{equation}
  b_{j,t}^{(d)}=-\nu_{j,t}^{(d)}.
  \label{eq:router-bias}
\end{equation}
Scalar anchoring is therefore exactly equivalent to centroid anchoring for
the live statistic in the linear case, while requiring only $O(1)$ persistent
state per node rather than $O(\dim(x))$.\footnote{For the inference statistic,
the equivalence is affected by the drift of $W_{j,t}^{(d)}$ over the EMA
window: $\nu_{j,t}^{(d)}$ averages past values of
$\bigl(W_{j,\tau}^{(d)}\bigr)^\top\bar x_{j,\tau}^{(d)}$ computed with
potentially different router parameters.}

For a nonlinear node map, the anchored quantity is the scalar node score
rather than a point in token space. At training time, the decision boundary is
the level set
\[
  \left\{
    x:
    f_{j,t}^{(d)}(x)=\widehat\nu_{j,t}^{(d)}
  \right\}.
\]
This boundary may be curved, piecewise linear, or disconnected, depending on
the node map, and need not pass through a centroid in the original token
space. What remains exact is the score-centering identity
\cref{eq:zero-mean-logit}. Accordingly, geometric arguments that require
halfspace sublevel sets apply directly to linear routing, or to routing that
is linear after a shared feature map, but not to an arbitrary nonlinear node
map.

\medskip

\noindent\textbf{Gradient geometry.}
During training, the arrival weights in \cref{eq:node-arrival-weights} are detached when they
are used to form the live anchor. Their forward values determine which tokens
contribute to the node statistic, but gradients from the anchor are not
propagated through the routing masks. The live anchor remains differentiable
through the node scores themselves. Therefore, for the parameters
$\theta_{j,t}^{(d)}$ of node $(d,j)$,
\begin{equation}
  \nabla_{\theta_{j,t}^{(d)}}z_{j,t}^{(d)}(x_b)
  =
  \nabla_{\theta_{j,t}^{(d)}}f_{j,t}^{(d)}(x_b)
  -
  \sum_{b'=1}^{B}
  \omega_{j,t}^{(d)}(x_{b'})
  \nabla_{\theta_{j,t}^{(d)}}f_{j,t}^{(d)}(x_{b'}).
  \label{eq:anchored-grad}
\end{equation}
Thus, each node receives parameter gradients centered by the
arrival-weighted mean gradient of the population it serves. For the linear
map,
\begin{equation}
  \nabla_{W_{j,t}^{(d)}}z_{j,t}^{(d)}(x_b)
  =
  x_b-\bar x_{j,t}^{(d)}.
  \label{eq:linear-anchored-grad}
\end{equation}
For a nonlinear map, the same centering occurs in the corresponding Jacobian
features. Anchoring therefore removes the shared mean component that could
otherwise dominate the router gradient when the incoming representations
contain a large common offset.


\subsection{Phase 2: expert execution}
\label{sec:phase2}

Phase 1 produces the leaf-mask matrix
$M_{\mathrm{experts}}=M^{(D)}\in\mathbb R^{B\times E}$, with
$[M^{(D)}]_{b,e}=m_e^{(D)}(x_b)$. Each leaf probability is the product of the
$D$ branch probabilities along its root-to-leaf path, and every internal node
splits its incoming mass between its two children with probabilities $p$ and
$1-p$, so the router outputs a normalized probability distribution over the
$E$ leaf experts rather than unnormalized expert scores. 

The router returns both this
distribution and the indices of its $K$ largest entries, leaving top-$K$
sparsification to the expert layer: the same routing distribution thus
supports both dense and sparse implementations, and routing statistics and
intended expert loads can be measured before capacity constraints alter the
realized assignments. 

\medskip

\noindent\textbf{Gating.} Let $\mathcal S_K(x)\subseteq\cE$ index the $K$ largest
entries of $m^{(D)}(x)$, with retained mass
$R_K(x):=\sum_{e'\in\mathcal S_K(x)}m_{e'}^{(D)}(x)$. The gate applied to
expert $e$ is given by 
\begin{align}
  \widetilde m_e(x)
  =
  \frac{
    m_e^{(D)}(x)\,
    \one\{e\in\mathcal S_K(x)\}
  }{
    R_K(x)^\eta
  },
  \qquad
  \eta\in\{0,1\},
  \label{eq:gate}
\end{align}
With $\eta=1$, each token's retained gates are renormalized, after top-$K$
masking, to sum to one; with $\eta=0$, they retain their original path
probabilities. Importantly, normalization is applied only after top-$K$ masking;
unselected experts therefore have no effect on the relative weights of the
selected experts.

Regarding the choice of $\eta$ in \cref{eq:gate}, note that
renormalization preserves the relative weights
within $\mathcal S_K(x)$. For $K>1$, this produces a convex combination of the
selected expert outputs and stabilizes the scale of the mixture. For $K=1$,
however, renormalization would force the unique nonzero gate to equal one,
discarding the router's only scalar measure of confidence in the selected
leaf. We therefore use $\eta=\one\{K>1\}$ as the default choice.
\medskip

\noindent \textbf{Execution.} The \branch{} output is the gated combination of the selected experts,
\begin{align}
\widehat y(x)
  =
  \sum_{e\in\mathcal S_K(x)}
  \widetilde m_e(x)\,F_e(x).
  \label{eq:expert-output}
  \end{align}
The sparse implementation evaluates only the $K$ selected experts, grouping
tokens by expert and scattering the gated outputs back to their original
positions. Ignoring dispatch overhead, this requires a
$\Theta(K/E)$ fraction of the expert computation of dense evaluation.


\section{Theoretical Analysis}
\label{sec:theory}

We analyze four components of BRANCH-MoE: tracking of the arrival-weighted
score anchors, prevention of routing-mass collapse, optimization of sparsely
selected experts, and communication locality. 
Proofs are deferred to
\cref{app:proofs}.

\subsection{Arrival-weighted population and anchor tracking}
\label{app:anchor-tracking}

Fix a node $(d,j)$ and temporarily freeze the router and the node score. To
avoid excessive subscripts, write $\widehat\nu_{t}= \widehat\nu_{j,t}^{(d)}$ and $\nu_{t}= \nu_{j,t}^{(d)}$.

\begin{assumption}[Stationary anchor estimates]
\label{ass:stationary-anchor}
There is a deterministic scalar $\bar\nu$ such that
\begin{equation}
  \widehat\nu_t=\bar\nu+\xi_t,
  \qquad
  \E[\xi_t\mid\mathcal F_t]=0,
  \qquad
  \E[\xi_t^2\mid\mathcal F_t]\le s^2.
  \label{eq:anchor-noise}
\end{equation}
Here $\mathcal F_t$ contains the model state and all batches preceding the
$t$-th update, so $\nu_t$ is $\mathcal F_t$-measurable and
$\widehat\nu_t$ is computed from a fresh batch. The initialization $\nu_0$
is deterministic.
\end{assumption}

\begin{theorem}[Stationary behavior of the anchor EMA]
\label{thm:anchor-ema-stationary}
Under \cref{ass:stationary-anchor}, consider the constant-decay phase of
\cref{eq:logit-anchor-ema}, after the warmup has reached
$\alpha_t=\alpha$. Reindexing the first update in this phase as $t=0$, the
anchor satisfies
\begin{align}
  \E[\nu_t-\bar\nu]
  &=
  \alpha^t(\nu_0-\bar\nu),
  \label{eq:anchor-ema-mean}\\
  \E[(\nu_t-\bar\nu)^2]
  &\le
  \alpha^{2t}(\nu_0-\bar\nu)^2
  +\frac{1-\alpha}{1+\alpha}
   (1-\alpha^{2t})s^2.
  \label{eq:anchor-ema-mse}
\end{align}
Thus, the initialization error decays geometrically, while the asymptotic
mean-square fluctuation is at most $(1-\alpha)s^2/(1+\alpha)$.
\end{theorem}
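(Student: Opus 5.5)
The plan is to work with the centered error $e_t := \nu_t-\bar\nu$ and reduce the constant-decay recursion to a scalar linear recursion driven by martingale-difference noise. Subtract $\bar\nu$ from both sides of \cref{eq:logit-anchor-ema} with $\alpha_t=\alpha$ and substitute $\widehat\nu_t=\bar\nu+\xi_t$ from \cref{ass:stationary-anchor}. This gives
\[
  e_{t+1}=\alpha e_t+(1-\alpha)\xi_t .
\]

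For the mean identity \cref{eq:anchor-ema-mean}, I condition on $\mathcal F_t$. Since $\nu_t$ is $\mathcal F_t$-measurable and $\E[\xi_t\mid\mathcal F_t]=0$, we get $\E[e_{t+1}\mid\mathcal F_t]=\alpha e_t$. Taking total expectations yields $\E[e_{t+1}]=\alpha\E[e_t]$. Induction from the deterministic $e_0=\nu_0-\bar\nu$ then gives $\E[e_t]=\alpha^t e_0$.

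For the second moment \cref{eq:anchor-ema-mse}, I square the recursion and condition on $\mathcal F_t$. The cross term $2\alpha(1-\alpha)e_t\xi_t$ has zero conditional mean, again by measurability of $e_t$ and the martingale-difference property. The noise term is bounded by $(1-\alpha)^2 s^2$. Writing $V_t:=\E[e_t^2]$, this gives
\[
  V_{t+1}\le \alpha^2 V_t+(1-\alpha)^2 s^2 .
\]
Unrolling the recursion by induction on $t$ gives
\[
  V_t\le \alpha^{2t}V_0+(1-\alpha)^2 s^2\sum_{k=0}^{t-1}\alpha^{2k}
  =\alpha^{2t}e_0^2+(1-\alpha)^2 s^2\,\frac{1-\alpha^{2t}}{1-\alpha^2}.
\]
The identity $(1-\alpha)^2/(1-\alpha^2)=(1-\alpha)/(1+\alpha)$ yields exactly the stated bound. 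Letting $t\to\infty$ gives the asymptotic fluctuation $(1-\alpha)s^2/(1+\alpha)$.

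The only delicate step is the vanishing of the cross term. It needs $e_t$ to be $\mathcal F_t$-measurable, which holds because $\nu_t$ depends only on batches before update $t$. It also needs square integrability of $e_t$, so that the conditional expectation of the product is well defined and factors. I would establish the latter inductively alongside the bound: $V_0<\infty$ since $\nu_0$ is deterministic, and the recursion preserves finiteness. I would also note that reindexing the constant-decay phase to start at $t=0$ is harmless. It only requires $\nu_0$ at the start of that phase to be treated as given. The assumption's determinism of the initialization, or conditioning on the warmup history, covers this, since the argument above is conditional on $\mathcal F_0$.
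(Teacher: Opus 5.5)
Your proposal is correct and follows the paper's proof essentially step for step. Both arguments use the centered recursion $e_{t+1}=\alpha e_t+(1-\alpha)\xi_t$, condition on $\mathcal F_t$ to remove the mean and cross terms, and unroll the second-moment recursion with $\sum_{k=0}^{t-1}\alpha^{2k}=(1-\alpha^{2t})/(1-\alpha^2)$. Your extra remarks on square integrability and on treating the post-warmup $\nu_0$ as given are sound refinements that the paper leaves implicit.
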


The theorem makes the role of $\alpha$ transparent. A larger $\alpha$ reduces
the sampling-noise floor but makes the EMA slower to respond. Under soft
routing every node receives positive mass in exact arithmetic, but deep-node
arrival weights can be highly concentrated, increasing the variance parameter
$s^2$ and the risk of numerical underflow.

\begin{proposition}[Tracking a drifting anchor]
\label{prop:anchor-ema-drift}
Under the same constant-decay update, suppose
\begin{equation}
  \E[\widehat\nu_t\mid\mathcal F_t]=\bar\nu_t,
  \qquad
  \E[(\widehat\nu_t-\bar\nu_t)^2\mid\mathcal F_t]\le s^2,
  \qquad
  |\bar\nu_{t+1}-\bar\nu_t|\le v,
  \label{eq:drifting-anchor-assumption}
\end{equation}
where $(\bar\nu_t)$ and $\nu_0$ are deterministic. Then
\begin{align}
  \E[(\nu_t-\bar\nu_t)^2]
  \le{}&
  \left(
    \alpha^t|\nu_0-\bar\nu_0|
    +\frac{v(1-\alpha^t)}{1-\alpha}
  \right)^2
  \nonumber\\
  &+
  \frac{1-\alpha}{1+\alpha}
  (1-\alpha^{2t})s^2.
  \label{eq:anchor-drift-bound}
\end{align}
\end{proposition}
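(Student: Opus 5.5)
We decompose the error into a deterministic bias part and a zero-mean noise part, and bound each separately. Write $e_t:=\nu_t-\bar\nu_t$ and $\xi_t:=\widehat\nu_t-\bar\nu_t$. The constant-decay update $\nu_{t+1}=\alpha\nu_t+(1-\alpha)\widehat\nu_t$ then gives the error recursion
\[
  e_{t+1}
  =\alpha e_t+(1-\alpha)\xi_t-(\bar\nu_{t+1}-\bar\nu_t).
\]
Unrolling it from $t=0$ yields $e_t=\beta_t+N_t$, where
\[
  \beta_t:=\alpha^t e_0-\sum_{k=0}^{t-1}\alpha^{t-1-k}(\bar\nu_{k+1}-\bar\nu_k),
  \qquad
  N_t:=(1-\alpha)\sum_{k=0}^{t-1}\alpha^{t-1-k}\xi_k .
\]
Since $\nu_0$ and the sequence $(\bar\nu_t)$ are deterministic, $\beta_t$ is deterministic.

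The cross term vanishes, so $\E[e_t^2]=\beta_t^2+\E[N_t^2]$. This holds because $\E[\xi_k\mid\mathcal F_k]=0$ by \eqref{eq:drifting-anchor-assumption}, which gives $\E[\beta_t N_t]=\beta_t\E[N_t]=0$.

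For the bias term, apply the triangle inequality together with $|\bar\nu_{k+1}-\bar\nu_k|\le v$:
\[
  |\beta_t|
  \le\alpha^t|\nu_0-\bar\nu_0|+v\sum_{i=0}^{t-1}\alpha^i
  =\alpha^t|\nu_0-\bar\nu_0|+\frac{v(1-\alpha^t)}{1-\alpha}.
\]
Squaring this gives the first term of \eqref{eq:anchor-drift-bound}.

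For the noise term, expand $\E[N_t^2]$ as a double sum. The off-diagonal terms vanish by the martingale-difference property. For $k<l$, the variable $\xi_k$ is $\mathcal F_l$-measurable, because $\mathcal F_l$ contains all batches preceding update $l$ and the model state. Hence
\[
  \E[\xi_k\xi_l]=\E\bigl[\xi_k\,\E[\xi_l\mid\mathcal F_l]\bigr]=0 .
\]
The diagonal terms satisfy $\E[\xi_k^2]\le s^2$ by the tower property. Therefore
\[
  \E[N_t^2]
  \le(1-\alpha)^2 s^2\sum_{i=0}^{t-1}\alpha^{2i}
  =(1-\alpha)^2 s^2\,\frac{1-\alpha^{2t}}{1-\alpha^2}
  =\frac{1-\alpha}{1+\alpha}(1-\alpha^{2t})s^2,
\]
which is the second term. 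Adding the two bounds proves \eqref{eq:anchor-drift-bound}. This is essentially the argument behind \cref{thm:anchor-ema-stationary}, specialized here to $v=0$ and with the drift added.

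The main obstacle is the measurability bookkeeping needed for the orthogonality claims. We need $\xi_k$ to be $\mathcal F_l$-measurable for $k<l$. We also need $\bar\nu_k$ to be deterministic, so that $\beta_t$ can be pulled out of expectations. The first follows from the stated definition of $\mathcal F_t$; the second is assumed explicitly in the proposition. If $\bar\nu_t$ were allowed to be random, the cleaner route would be Minkowski's inequality in $L^2$, which gives $\|e_t\|_2\le|\beta_t|+\|N_t\|_2$. That bound is slightly weaker than the sum-of-squares form, so the proposition's determinism assumption is precisely what allows the additive decomposition.
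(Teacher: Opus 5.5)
Your proof is correct and follows the paper's argument essentially step for step. You use the same error recursion, unrolled into a deterministic drift part plus a geometrically weighted noise sum, and the same triangle-inequality bound on the drift part. You then use the martingale-difference property to eliminate both the cross term and the off-diagonal noise terms. Your explicit check that $\xi_k$ is $\mathcal F_l$-measurable for $k<l$, and that $\bar\nu_t$ must be deterministic for the cross term to vanish, spells out what the paper leaves implicit.
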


The two nontransient terms exhibit the tracking trade-off of the single EMA:
large $\alpha$ suppresses sampling noise, whereas small $\alpha$ reduces lag.


\subsection{Score anchoring and routing-mass anti-collapse}
\label{sec:score-anticollapse}

Fix the soft router, including its active anchors, and a node $(d,j)$, and
suppress the fixed step index. Its expected incoming mass is
$q_j^{(d)}:=\E[m_j^{(d)}(X)]>0$. For every measurable set $\mathcal A$ in the input
space, define the arrival-weighted law by
$\mathcal D_j^{(d)}(\mathcal A):=
\E[m_j^{(d)}(X)\one\{X\in\mathcal A\}]/q_j^{(d)}$. Let
$\nu_j^{(d),\star}:=\E_{\mathcal D_j^{(d)}}[f_j^{(d)}(X)]$ be the population
score anchor and let
$\delta_j^{(d)}:=a_j^{(d)}-\nu_j^{(d),\star}$ denote the active-anchor error.
The relative masses sent to the right and left children are, respectively,
$\pi_{j,+}^{(d)}:=\E_{\mathcal D_j^{(d)}}[
\sigma(f_j^{(d)}(X)-a_j^{(d)})]$ and
$\pi_{j,-}^{(d)}:=\E_{\mathcal D_j^{(d)}}[
\sigma(a_j^{(d)}-f_j^{(d)}(X))]$.

\begin{lemma}[Log-concavity under soft linear routing]
\label{lem:soft-logconcavity}
Suppose the input distribution has a log-concave density on its affine
support and every ancestor of node $(d,j)$ uses an affine score. Then the
arrival-weighted law $\mathcal D_j^{(d)}$ is log-concave.
\end{lemma}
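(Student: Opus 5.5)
The plan is to write the arrival-weighted density explicitly and show it is the product of log-concave factors. Let $\rho$ denote the log-concave density of $X$ on its affine support $\mathcal L$. By the cascade \cref{eq:cascade}, $m_j^{(d)}(x)$ is a product over the ancestors $(d',j')\in\pathset((d,j))$ of factors of the form $\sigma(z_{j'}^{(d')}(x))$ or $1-\sigma(z_{j'}^{(d')}(x))=\sigma(-z_{j'}^{(d')}(x))$. Here $z_{j'}^{(d')}(x)=f_{j'}^{(d')}(x)-a_{j'}^{(d')}$, and $a_{j'}^{(d')}$ is a constant because the router and its active anchors are frozen. Hence on $\mathcal L$ the law $\mathcal D_j^{(d)}$ has density
\[
  x\mapsto \frac{1}{q_j^{(d)}}\,\rho(x)\prod_{(d',j')\in\pathset((d,j))}\sigma\bigl(\pm z_{j'}^{(d')}(x)\bigr)
\]
with respect to Lebesgue measure on $\mathcal L$. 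The definition $\mathcal D_j^{(d)}(\mathcal A)=\E[m_j^{(d)}(X)\one\{X\in\mathcal A\}]/q_j^{(d)}$ gives this directly, and $q_j^{(d)}>0$ was already noted.

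Taking logarithms, $\log$ of the density is $\log\rho(x)+\sum\log\sigma(\pm z_{j'}^{(d')}(x))-\log q_j^{(d)}$. The key one-dimensional fact is that $u\mapsto\log\sigma(u)=-\log(1+e^{-u})$ is concave, since its second derivative is $-\sigma(u)(1-\sigma(u))<0$. The same holds for $u\mapsto\log\sigma(-u)$. Each ancestor score is affine, so each $\pm z_{j'}^{(d')}$ is an affine function of $x$. A concave function composed with an affine map is concave, and a sum of concave functions is concave. So the log-density is concave on $\mathcal L$, with the value $-\infty$ off the support of $\rho$ preserved, because the sigmoid factors are strictly positive. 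This gives log-concavity. Normalization by the positive constant $q_j^{(d)}$ does not affect it.

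The only step needing care is the degenerate-support bookkeeping. The support of $\mathcal D_j^{(d)}$ equals that of $\rho$, since the mask is strictly positive everywhere, so the affine support is unchanged. Restricted to $\mathcal L$, the affine maps remain affine in intrinsic coordinates. I would also remark that the argument uses nothing about $(d,j)$'s own map, only its ancestors. It would fail for nonlinear ancestor maps, because $\log\sigma\circ f$ need not be concave then. This matches the hypothesis that every ancestor uses an affine score, which covers the linear maps \cref{eq:node-map-linear} together with the constant anchor offset. I expect no genuine obstacle beyond stating the concavity of $\log\sigma$ cleanly.
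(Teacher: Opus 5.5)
Your proposal is correct and follows the paper's proof essentially step for step. The paper also writes the mask as a product of factors $\sigma(\pm z_u(x))$ with affine $z_u$, uses concavity of $\log\sigma$ to make each factor log-concave, multiplies by the log-concave input density, and notes that normalization preserves log-concavity; your added remarks that the frozen active anchors are constants and that the strictly positive mask leaves the affine support unchanged make explicit what the paper's proof leaves implicit.
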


\begin{corollary}[Linear routing under a log-concave arrival law]
\label{cor:linear-soft-anticollapse}
Suppose $\mathcal D_j^{(d)}$ is log-concave on its affine support and
\begin{equation}
  f_j^{(d)}(x)
  =
  \bigl(W_j^{(d)}\bigr)^\top x
  \label{eq:linear-node-score-theory}
\end{equation}
with $W_j^{(d)}$ nonzero on the linear span of that support. At the exact
population anchor,
\begin{equation}
  \min\{\pi_{j,+}^{(d)},\pi_{j,-}^{(d)}\}
  \ge
  \frac{1}{2\mathrm e}.
  \label{eq:linear-soft-anticollapse}
\end{equation}
If the centered projected score has density bounded by $A_j^{(d)}$, then for
an arbitrary active anchor,
\begin{equation}
  \min\{\pi_{j,+}^{(d)},\pi_{j,-}^{(d)}\}
  \ge
  \frac12
  \left(
    \frac1{\mathrm e}
    -A_j^{(d)}|\delta_j^{(d)}|
  \right)_+.
  \label{eq:linear-soft-anticollapse-error}
\end{equation}
\end{corollary}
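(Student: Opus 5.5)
The plan is to reduce everything to a one-dimensional statement about the centered projected score. Let $X\sim\mathcal D_j^{(d)}$ and set $Y:=\bigl(W_j^{(d)}\bigr)^\top X$ and $Z:=Y-\E Y=Y-\nu_j^{(d),\star}$. By the Prékopa--Leindler theorem, linear images of log-concave measures are log-concave. Hence $Y$, and therefore $Z$, is a log-concave random variable on $\R$. Because $W_j^{(d)}$ is nonzero on the linear span of the support, $Y$ is nondegenerate and has a log-concave density on an interval. Log-concave laws have exponential tails, so $\E Y$ is finite and $\nu_j^{(d),\star}$ is well defined.

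The key ingredient is the one-dimensional Grünbaum-type inequality: for a log-concave real random variable $Z$ with $\E Z=0$,
\[
  \Pp(Z\ge 0)\ge \tfrac1{\mathrm e}
  \quad\text{and}\quad
  \Pp(Z\le 0)\ge \tfrac1{\mathrm e}.
\]
I would cite this as a standard fact (Grünbaum; Bobkov) rather than reprove it. If a self-contained argument is wanted, I would use the tail function $G(t)=\Pp(Z\ge t)$, which is log-concave for log-concave $Z$. Write $\E Z=\int_{-\infty}^{\infty}\bigl(G(t)-\one\{t<0\}\bigr)\,dt=0$. Then compare $\log G$ with its tangent line at $0$: this shows the extremal case is the (shifted) exponential distribution, for which $\Pp(Z\ge \E Z)=1/\mathrm e$ exactly. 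This extremal comparison is the only nontrivial step and the one I expect to be the main obstacle if it is proved from scratch. The reflected variable $-Z$ is also log-concave, which gives the left-tail bound.

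With this fact in hand, the exact-anchor bound is immediate. The sigmoid satisfies $\sigma(z)\ge\tfrac12\one\{z\ge0\}$. At $a_j^{(d)}=\nu_j^{(d),\star}$ we therefore get $\pi_{j,+}^{(d)}=\E\sigma(Z)\ge\tfrac12\Pp(Z\ge0)\ge 1/(2\mathrm e)$. Symmetrically, $\pi_{j,-}^{(d)}=\E\sigma(-Z)\ge\tfrac12\Pp(Z\le0)\ge1/(2\mathrm e)$, which proves \cref{eq:linear-soft-anticollapse}.

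For an arbitrary active anchor, $\pi_{j,+}^{(d)}=\E\sigma(Z-\delta_j^{(d)})\ge\tfrac12\Pp(Z\ge\delta_j^{(d)})$, and the proof splits on the sign of $\delta_j^{(d)}$.

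\textbf{Case $\delta_j^{(d)}\le0$.} Here $\Pp(Z\ge\delta_j^{(d)})\ge\Pp(Z\ge0)\ge1/\mathrm e$.

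\textbf{Case $\delta_j^{(d)}>0$.} Here
\[
  \Pp(Z\ge\delta_j^{(d)})=\Pp(Z\ge0)-\Pp\bigl(0\le Z<\delta_j^{(d)}\bigr)\ge\tfrac1{\mathrm e}-A_j^{(d)}|\delta_j^{(d)}|,
\]
using the density bound $A_j^{(d)}$ on $Z$.

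The same argument with $-Z$ and $-\delta_j^{(d)}$ bounds $\pi_{j,-}^{(d)}$. Since each $\pi$ is nonnegative, the lower bound may be replaced by its positive part, which yields \cref{eq:linear-soft-anticollapse-error}. \Cref{lem:soft-logconcavity} is what makes the log-concavity hypothesis on $\mathcal D_j^{(d)}$ checkable at deep nodes, but it is not needed inside this proof.
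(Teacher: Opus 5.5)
Your proposal is correct and follows the paper's proof essentially step for step. Both arguments center the projected score $Z=(W_j^{(d)})^\top(X-\E X)$, apply Gr\"unbaum's $1/\mathrm e$ bound to both half-lines, and use $\sigma(y)\ge\frac12$ for $y\ge0$; for a perturbed anchor they shift the threshold by $\delta_j^{(d)}$ at a cost of at most $A_j^{(d)}|\delta_j^{(d)}|$ and truncate at zero. The differences are cosmetic: you first reduce to the one-dimensional marginal via Pr\'ekopa--Leindler and split explicitly on the sign of $\delta_j^{(d)}$, whereas the paper applies Gr\"unbaum to the halfspace through the centroid directly and handles the threshold shift in one line.
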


For a fixed soft router, let
\begin{equation}
  q_e
  :=
  \E[m_e^{(D)}(X)]
  \label{eq:expected-leaf-mass}
\end{equation}
be the expected routing mass of leaf $e$. If the density condition in
\cref{cor:linear-soft-anticollapse} holds at every node on the path to $e$,
define
\begin{equation}
  \rho_j^{(d)}
  :=
  \frac12
  \left(
    \frac1{\mathrm e}
    -A_j^{(d)}|\delta_j^{(d)}|
  \right)_+,
  \label{eq:node-mass-lower-bound}
\end{equation}
then repeated conditioning along the path to $e$ gives
\begin{equation}
  q_e
  \ge
  \underline q_e:=\prod_{(d,j)\in\pathset(e)}\rho_j^{(d)}.
  \label{eq:soft-leaf-mass-lower-bound}
\end{equation}
In the linear log-concave case with exact population anchors,
\begin{equation}
 \underline q_e\ge(2\mathrm e)^{-D}
  \qquad\text{for every leaf }e.
  \label{eq:exact-soft-leaf-bound}
\end{equation}

These bounds do not, by themselves, imply that every leaf is retained by
deterministic top-$K$ truncation; this distinction is addressed next.

\subsection{Expert optimization under sparse top-$K$ routing}
\label{app:expert-optimization}

Freeze the router, the top-$K$ rule, and any capacity policy. Let
\begin{equation}
  I_{e,t}
  :=
  \one\{\text{expert $e$ executes example $t$}\},
  \qquad
  p_e:=\Pp(I_{e,t}=1).
  \label{eq:expert-execution-probability}
\end{equation}
Thus $p_e$ is the actual execution probability after top-$K$ selection, not the expected leaf mass $q_e$ in
\cref{eq:expected-leaf-mass}. Let $\theta_e$ collect the trainable parameters
of expert $F_e$, and let $\mathcal L_e$ denote its conditional risk on the
examples it executes.

\begin{assumption}[Frozen-router expert SGD]
\label{ass:frozen-router-expert-sgd}
The examples are i.i.d. and
$\Pp(I_{e,t}=1\mid\mathcal H_t)=p_e$, where $\mathcal H_t$ is the history
before example $t$, and $\theta_{e,0}$ is deterministic. The conditional risk $\mathcal L_e$ is
$\mu_e$-strongly convex and $L_e$-smooth with minimizer $\theta_e^\star$.
Conditional on $I_{e,t}=1$, the stochastic gradient satisfies
\begin{align}
  \E[g_{e,t}\mid\mathcal H_t,I_{e,t}=1]
  &=
  \nabla\mathcal L_e(\theta_{e,t}),
  \nonumber\\
  \E\left[
    \|g_{e,t}-\nabla\mathcal L_e(\theta_{e,t})\|^2
    \mid\mathcal H_t,I_{e,t}=1
  \right]
  &\le
  \sigma_e^2.
  \label{eq:expert-gradient-assumption-current}
\end{align}
\end{assumption}

The global-example update is
\begin{equation}
  \theta_{e,t+1}
  =
  \theta_{e,t}-\gamma_e I_{e,t}g_{e,t}.
  \label{eq:expert-sgd-update-current}
\end{equation}

\begin{theorem}[Expert progress under intermittent execution]
\label{thm:expert-progress-current}
Under \cref{ass:frozen-router-expert-sgd}, if $p_e>0$ and
$0<\gamma_e\le1/L_e$, then
\begin{align}
  \E[\|\theta_{e,T}-\theta_e^\star\|^2]
  \le{}&
  (1-p_e\gamma_e\mu_e)^T
  \|\theta_{e,0}-\theta_e^\star\|^2
  +\frac{\gamma_e\sigma_e^2}{\mu_e}
  \nonumber\\
  \le{}&
  \exp(-p_e\gamma_e\mu_eT)
  \|\theta_{e,0}-\theta_e^\star\|^2
  +\frac{\gamma_e\sigma_e^2}{\mu_e}.
  \label{eq:expert-convergence-current}
\end{align}
Consequently, if $p_e\ge p_{\min}>0$ for all experts, then, for
$0<\varepsilon<\|\theta_{e,0}-\theta_e^\star\|^2$, the transient is at most
$\varepsilon$ once
\begin{equation}
  T\ge
  \frac{1}{p_{\min}\gamma_e\mu_e}
  \log\!\left(
    \frac{\|\theta_{e,0}-\theta_e^\star\|^2}{\varepsilon}
  \right).
  \label{eq:expert-iteration-complexity}
\end{equation}
\end{theorem}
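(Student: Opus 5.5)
The plan is to derive a one-step contraction for the conditional error $\E[\|\theta_{e,t+1}-\theta_e^\star\|^2\mid\mathcal H_t]$ and then unroll it. Write $\Delta_t:=\theta_{e,t}-\theta_e^\star$. Expanding the update \cref{eq:expert-sgd-update-current} and using $I_{e,t}^2=I_{e,t}$ gives
\[
  \|\Delta_{t+1}\|^2=\|\Delta_t\|^2-2\gamma_e I_{e,t}\langle g_{e,t},\Delta_t\rangle+\gamma_e^2 I_{e,t}\|g_{e,t}\|^2 .
\]
We condition on $\mathcal H_t$. Because $\theta_{e,t}$ is $\mathcal H_t$-measurable and $\Pp(I_{e,t}=1\mid\mathcal H_t)=p_e$, the tower rule turns each term containing $I_{e,t}$ into $p_e$ times the conditional expectation given $I_{e,t}=1$. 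By \cref{eq:expert-gradient-assumption-current}, that conditional expectation contains the true gradient $\nabla\mathcal L_e(\theta_{e,t})$ in the inner product, and it satisfies $\E\|g_{e,t}\|^2\le\|\nabla\mathcal L_e(\theta_{e,t})\|^2+\sigma_e^2$. This yields
\[
  \E[\|\Delta_{t+1}\|^2\mid\mathcal H_t]\le\|\Delta_t\|^2-p_e\bigl(2\gamma_e\langle\nabla\mathcal L_e(\theta_{e,t}),\Delta_t\rangle-\gamma_e^2\|\nabla\mathcal L_e(\theta_{e,t})\|^2\bigr)+p_e\gamma_e^2\sigma_e^2 .
\]

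The key deterministic step bounds the bracket from below by $\gamma_e\mu_e\|\Delta_t\|^2$. This is the standard full-gradient-step inequality. From $\mu_e$-strong convexity and $L_e$-smoothness we have coercivity, $\langle\nabla\mathcal L_e(\theta),\theta-\theta^\star\rangle\ge\frac{\mu_eL_e}{\mu_e+L_e}\|\theta-\theta^\star\|^2+\frac{1}{\mu_e+L_e}\|\nabla\mathcal L_e(\theta)\|^2$. With $\gamma_e\le 1/L_e$, this is enough to absorb the $\gamma_e^2\|\nabla\mathcal L_e\|^2$ term.

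A simpler route avoids the coercivity inequality. Strong convexity gives $\langle\nabla\mathcal L_e,\Delta\rangle\ge\mathcal L_e(\theta)-\mathcal L_e^\star+\frac{\mu_e}{2}\|\Delta\|^2$, and smoothness gives $\|\nabla\mathcal L_e\|^2\le 2L_e(\mathcal L_e(\theta)-\mathcal L_e^\star)$. The bracket is then at least $\gamma_e\mu_e\|\Delta\|^2+2\gamma_e(1-\gamma_eL_e)(\mathcal L_e-\mathcal L_e^\star)\ge\gamma_e\mu_e\|\Delta\|^2$. This is where $\gamma_e\le1/L_e$ is used, and it is the only genuinely analytic step. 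I expect it to be the main point to get right, since the constant $\mu_e$ rather than $\mu_e/2$ in the rate requires using both inequalities jointly.

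Taking full expectations gives the scalar recursion $r_{t+1}\le(1-p_e\gamma_e\mu_e)r_t+p_e\gamma_e^2\sigma_e^2$, where $r_t:=\E\|\Delta_t\|^2$. The contraction factor lies in $[0,1)$ because $\gamma_e\mu_e\le\mu_e/L_e\le1$ and $p_e>0$. Unrolling from the deterministic $\theta_{e,0}$ gives
\[
  r_T\le(1-p_e\gamma_e\mu_e)^Tr_0+p_e\gamma_e^2\sigma_e^2\sum_{k<T}(1-p_e\gamma_e\mu_e)^k .
\]
The geometric sum is at most $1/(p_e\gamma_e\mu_e)$, so the noise term is at most $\gamma_e\sigma_e^2/\mu_e$. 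This is the first line of \cref{eq:expert-convergence-current}. Note that $p_e$ cancels in the noise floor and survives only in the rate.

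The second line follows from $1-u\le e^{-u}$. For \cref{eq:expert-iteration-complexity}, we use $p_e\ge p_{\min}$ to bound $\exp(-p_e\gamma_e\mu_eT)\le\exp(-p_{\min}\gamma_e\mu_eT)$. Solving $\exp(-p_{\min}\gamma_e\mu_eT)\|\Delta_0\|^2\le\varepsilon$ for $T$ gives the stated threshold, and the condition $\varepsilon<\|\Delta_0\|^2$ only ensures that the logarithm is positive.
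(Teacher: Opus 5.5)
Your proof is correct and follows the paper's skeleton: expand the update, condition on $\mathcal H_t$ to extract $p_e$, bound the conditional second moment by $\|\nabla\mathcal L_e\|^2+\sigma_e^2$, unroll, sum the geometric series, and apply $1-u\le e^{-u}$.

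The only divergence is in the step you flag as the analytic core. The paper absorbs $\gamma_e^2\|\nabla\mathcal L_e\|^2$ by cocoercivity, $\|\nabla\mathcal L_e(\theta_{e,t})\|^2\le L_e\langle u_t,\nabla\mathcal L_e(\theta_{e,t})\rangle$. The bracket is then at least $\gamma_e(2-\gamma_eL_e)\langle u_t,\nabla\mathcal L_e\rangle\ge\gamma_e\langle u_t,\nabla\mathcal L_e\rangle\ge\gamma_e\mu_e\|u_t\|^2$ by strong monotonicity, and function values never appear. You instead route through the optimality gap, using the strong-convexity lower bound together with $\|\nabla\mathcal L_e\|^2\le2L_e(\mathcal L_e-\mathcal L_e^\star)$.

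Both routes give the same constants. Your remark that obtaining $\mu_e$ rather than $\mu_e/2$ requires combining two inequalities is specific to your route; the paper gets the full $\mu_e$ directly from $\langle u_t,\nabla\mathcal L_e\rangle\ge\mu_e\|u_t\|^2$.

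Each version has an advantage. Yours is more self-contained, needing only the two basic first-order inequalities rather than the cocoercivity lemma. The paper's operator-level version makes the role of $\gamma_e\le1/L_e$ visible as $2-\gamma_eL_e\ge1$. It would also extend to $\gamma_e<2/L_e$ with rate factor $\gamma_e\mu_e(2-\gamma_eL_e)$, whereas your slack term $2\gamma_e(1-\gamma_eL_e)(\mathcal L_e-\mathcal L_e^\star)$ becomes negative beyond $1/L_e$.

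You also check explicitly that $1-p_e\gamma_e\mu_e\in[0,1)$. The paper leaves this implicit, but it is needed to unroll the recursion.
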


\paragraph{Implication for BRANCH-MoE.}
\Cref{thm:expert-progress-current} identifies the realized execution frequency
as the optimization bottleneck for sparsely activated experts: a larger minimum
frequency \(p_{\min}\) yields faster worst-expert convergence in global
training steps. BRANCH-MoE is designed to improve this quantity by anchoring
each internal decision to the population reaching that node, thereby
discouraging persistent high-level routing imbalances that would
simultaneously suppress every expert in an entire subtree. Although the
soft-mass bound in \cref{eq:soft-leaf-mass-lower-bound} does not formally
imply a positive top-\(K\) execution probability, it removes an important
mechanism of expert starvation and should make realized execution frequencies
larger and more stable than under unconstrained learned routing. Thus, the
theorem explains how the improved utilization encouraged by BRANCH-MoE can
translate into faster optimization of the least frequently selected experts.
Uniform random routing remains the natural benchmark for marginal balance;
BRANCH-MoE seeks comparable utilization while retaining task-dependent
specialization and tree-aligned communication locality.


\subsection{Communication locality from learned tree routing}
\label{sec:communication-locality}

We finally analyze communication at a hardware boundary whose domains are
aligned with depth-$r$ routing subtrees; a token is transmitted once to every
contacted domain and may then be replicated locally. The argument depends only
on the centered node scores, so it applies to both linear and nonlinear node
maps. Fix a token $x$, follow the more probable child for $r$ levels, and let
\(u_0,u_1,\ldots,u_r\) be the resulting path, where $u_i=(i,j_i)$ and $u_0$ is
the root. Let $\mathcal L(u_r)$ be the leaves below $u_r$, assumed to contain
at least $K$ experts, and let $\mathcal S_K(x)$ be the top-$K$ leaf set. Under
prefix-aligned placement, let $N_r(x)$ denote the number of depth-$r$
communication domains containing at least one selected expert, and suppose a
token representation contains $s_{\rm tok}$ bytes and transmission across this
boundary costs $c_r$ per byte. The learned score margin at $u_i$, the leaf
mass outside the prefix subtree, the smallest retained probability of
$\mathcal S_K(x)$, and the communication cost are
\begin{align}
  \gamma_i(x)
  &:=
  \left|z_{j_i,t}^{(i)}(x)\right|
  =
  \left|
    f_{j_i,t}^{(i)}(x)-a_{j_i,t}^{(i)}
  \right|,
  \label{eq:learned-score-margin}\\
  \varepsilon_r(x)
  &:=
  \sum_{e\notin\mathcal L(u_r)}m_{e,t}^{(D)}(x),
  \label{eq:outside-prefix-mass-current}\\
  \tau_K(x)
  &:=
  \min_{e\in\mathcal S_K(x)}m_{e,t}^{(D)}(x),
  \label{eq:topk-threshold-current}\\
  \operatorname{Comm}_r(x)
  &:=
  s_{\rm tok}c_rN_r(x).
  \label{eq:level-r-communication-current}
\end{align}
At $u_i$, the preferred child receives probability $\sigma(\gamma_i(x))$, while
the other child receives $\sigma(-\gamma_i(x))\le e^{-\gamma_i(x)}$.

\begin{theorem}[Learned-margin communication locality]
\label{thm:learned-margin-communication-current}
For every token $x$,
\begin{equation}
  \varepsilon_r(x)
  =
  1-\prod_{i=0}^{r-1}\sigma(\gamma_i(x))
  \le
  \sum_{i=0}^{r-1}e^{-\gamma_i(x)}.
  \label{eq:prefix-mass-concentration-current}
\end{equation}
Moreover,
\begin{equation}
  \left|
    \mathcal S_K(x)\setminus\mathcal L(u_r)
  \right|
  \le
  \left\lfloor
    \frac{\varepsilon_r(x)}{\tau_K(x)}
  \right\rfloor,
  \label{eq:number-selected-outside-current}
\end{equation}
and
\begin{equation}
  N_r(x)
  \le
  \min\left\{
    K,
    1+
    \left\lfloor
      \frac{\varepsilon_r(x)}{\tau_K(x)}
    \right\rfloor
  \right\}.
  \label{eq:communication-domain-bound-current}
\end{equation}
In particular, if
\begin{equation}
  \sum_{i=0}^{r-1}e^{-\gamma_i(x)}
  <
  \tau_K(x),
  \label{eq:single-domain-margin-condition-current}
\end{equation}
then all top-$K$ experts lie in $\mathcal L(u_r)$ and
\begin{equation}
  N_r(x)=1,
  \qquad
  \operatorname{Comm}_r(x)=s_{\rm tok}c_r.
  \label{eq:single-domain-communication-current}
\end{equation}
\end{theorem}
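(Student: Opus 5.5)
The plan has four steps: compute the mass outside the prefix subtree, bound it by a union-type inequality, count the selected experts that can fit in that mass, and then count communication domains.

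\textbf{Step 1: the identity for $\varepsilon_r(x)$.} By the cascade \cref{eq:cascade}, every leaf mass factors along its root-to-leaf path. Hence the total leaf mass below $u_r$ equals the incoming mask $m^{(r)}_{j_r}(x)$, which I would show by inducting downward from $u_r$ using mass preservation: the two children of a node carry exactly its mass. The mask $m^{(r)}_{j_r}(x)$ is the product of the branch probabilities taken along $u_0,\dots,u_{r-1}$. Since each $u_{i+1}$ is the more probable child of $u_i$, the factor at step $i$ is $\sigma(|z|)=\sigma(\gamma_i(x))$. Total leaf mass is one, so
\[
\varepsilon_r(x)=1-\prod_{i<r}\sigma(\gamma_i(x)).
\]

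\textbf{Step 2: the inequality in \cref{eq:prefix-mass-concentration-current}.} I would use the elementary bound
\[
1-\prod_i(1-a_i)\le\sum_i a_i \quad\text{for } a_i\in[0,1],
\]
proved by induction or as a union bound, with $a_i=\sigma(-\gamma_i)$. Then I would apply $\sigma(-\gamma)=1/(1+e^{\gamma})\le e^{-\gamma}$.

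\textbf{Step 3: counting selected experts outside $\mathcal L(u_r)$.} Every $e\in\mathcal S_K(x)\setminus\mathcal L(u_r)$ has $m_e^{(D)}(x)\ge\tau_K(x)$, and these masses are disjoint summands of $\varepsilon_r(x)$. This gives
\[
|\mathcal S_K(x)\setminus\mathcal L(u_r)|\,\tau_K(x)\le\varepsilon_r(x),
\]
and since the left side is an integer we may take the floor. One technicality needs checking: $\tau_K(x)>0$. This holds because soft masses are strictly positive for finite logits.

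\textbf{Step 4: counting domains.} Under prefix-aligned placement, $\mathcal L(u_r)$ is exactly one depth-$r$ domain. Every other contacted domain contains at least one selected expert outside $\mathcal L(u_r)$, so
\[
N_r(x)\le 1+|\mathcal S_K(x)\setminus\mathcal L(u_r)|.
\]
This holds whether or not the domain of $u_r$ is itself contacted. The cap $N_r\le K$ is trivial because the $K$ experts occupy at most $K$ domains.

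\textbf{The single-domain claim.} Under \cref{eq:single-domain-margin-condition-current}, Step 2 gives $\varepsilon_r<\tau_K$, so the floor in Step 3 is $0$ and no selected expert lies outside $\mathcal L(u_r)$. Since $\mathcal S_K$ is nonempty, exactly one domain is contacted, so $N_r=1$ and $\operatorname{Comm}_r=s_{\rm tok}c_r$ by definition.

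\textbf{Expected obstacles.} The main point needing care is the bookkeeping in Step 1 when $\gamma_i=0$, where ties in the choice of preferred child must be broken arbitrarily; the identity still holds because $\sigma(0)=1/2$ for either child. I also need to confirm that the domains are exactly the depth-$r$ subtrees.
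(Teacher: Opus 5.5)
Your proposal is correct and follows the paper's proof step for step: subtree mass conservation to obtain $\varepsilon_r(x)=1-\prod_{i<r}\sigma(\gamma_i(x))$, then the inequality $1-\prod_i(1-a_i)\le\sum_i a_i$ with $a_i=\sigma(-\gamma_i(x))$. It then uses the counting bound $|\mathcal S_K(x)\setminus\mathcal L(u_r)|\,\tau_K(x)\le\varepsilon_r(x)$ and the domain bound $N_r(x)\le 1+|\mathcal S_K(x)\setminus\mathcal L(u_r)|$, and your extra checks that $\tau_K(x)>0$ and that $\mathcal S_K(x)$ is nonempty (so $N_r(x)\ge1$) are small points the paper leaves implicit.
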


If the first $r$ score margins satisfy $\gamma_i(x)\ge\gamma$, then
\begin{equation}
  \varepsilon_r(x)\le re^{-\gamma}.
  \label{eq:uniform-margin-prefix-bound-current}
\end{equation}
Thus $\gamma>\log(r/\tau_K(x))$ is sufficient for one-domain dispatch at
the depth-$r$ boundary.

\paragraph{Interpretation.}
An upper-level decision acts on an entire subtree: a large learned margin
suppresses the total probability of every leaf in the nonpreferred branch, so
after $r$ confident decisions the routing mass is concentrated in a subtree of
$E/2^r$ experts; if that subtree is placed within one hardware domain, the
lower levels refine the expert choice locally. This per-token result is
compatible with the population-level anti-collapse result of
\cref{sec:score-anticollapse}: arrival-weighted anchoring prevents the token
population as a whole from placing arbitrarily little routing mass on either
child of an internal node, but does not require each token to split its mass
evenly. A token may assign most of its probability to one branch while other
tokens favor the opposite branch, so BRANCH-MoE can maintain broad expert
utilization \emph{across tokens} while producing concentrated, structured
assignments \emph{for each token}.

\section{Experiments}
\label{sec:experiments}

We evaluate \branch{} against representative MoE routers on four benchmarks,
testing (i) whether it is competitive on task quality without an auxiliary
balancing loss and (ii) whether it induces the expert locality analyzed in
\cref{sec:communication-locality}. In short, quality is at parity with the
strongest baselines on all four benchmarks, locality appears on the three UCI
benchmarks but not on Criteo, and load stays balanced without an auxiliary loss.

\subsection{Setup}
\label{sec:setup}

We use Criteo (${\approx}41.5$M examples, $39$ fields), Forest Covertype ($581$K,
$54$ features), HIGGS ($600$K, $28$), and YearPredictionMSD ($515$K, $90$). All
MoE conditions share one block of $E = 16$ SwiGLU experts (top-$4$, capacity
factor $1.25$) between a minimal input stage and a linear head, and differ only
in the gate. The baselines are softmax top-$K$ with an auxiliary loss (Switch),
dynamic-bias routing (DeepSeek-V3), logit-normalized softmax with an auxiliary
loss (Skywork), and a hash-based control (Hash). Node maps are linear or two-layer MLPs
of width $h \in \{2, 4, 8\}$. Every condition has five seeds; comparisons are
paired by seed on UCI and unpaired (Welch) on Criteo, where TPU execution is not
bitwise reproducible. We report the normalized tree distance $L/D$ between
co-selected experts ($L = \mathrm{bit\_length}(i \oplus j)$, $D = \log_2 E$;
random pairs score $0.817$, lower is more local) and the load CV (std/mean of
per-expert load). Details: \cref{sec:app-setup}.

\begin{table}[t]
\centering
\footnotesize
\setlength{\tabcolsep}{4pt}
\caption{Main results ($E{=}16$, top-$4$, $5$ seeds; mean\,$\pm$\,sd).
(a)~Best held-out loss, each router at its best of four node maps (tag),
selected on the reporting split; \branch{} also with the linear map covered by
the theory. Bold: best arm. Hash ignores the node map (UCI: best of its four
identical runs).
(b)~Normalized tree distance (random pairs $0.817$; bold: most local on UCI) /
load CV of the same arms. $^\dagger$Not read as locality (\cref{sec:criteo}).
Per-map results: \cref{sec:app-results}.}
\label{tab:main}
\begin{tabular}{@{}lcccc@{}}
\toprule
Method & Criteo (logloss) & Covertype (CE) & HIGGS (logloss) & MSD (MSE) \\
\midrule
\multicolumn{5}{@{}l}{\textit{(a) Best held-out loss} $\downarrow$} \\
\branch{} (best map) & $\mathbf{0.44596}$\tabsd{0.00015}\tabmap{h4} & $\mathbf{0.2372}$\tabsd{0.0013}\tabmap{h8} & $0.5256$\tabsd{0.0014}\tabmap{h8} & $0.6767$\tabsd{0.0027}\tabmap{h8} \\
\branch{} (linear)   & $0.44615$\tabsd{0.00025} & $0.2545$\tabsd{0.0041} & $0.5259$\tabsd{0.0008} & $0.6843$\tabsd{0.0035} \\
Skywork              & $0.44603$\tabsd{0.00030}\tabmap{lin} & $0.2406$\tabsd{0.0036}\tabmap{h8} & $\mathbf{0.5247}$\tabsd{0.0005}\tabmap{lin} & $0.6757$\tabsd{0.0015}\tabmap{h2} \\
DeepSeek-V3          & $0.44605$\tabsd{0.00005}\tabmap{h8} & $0.2421$\tabsd{0.0033}\tabmap{h8} & $0.5264$\tabsd{0.0013}\tabmap{lin} & $\mathbf{0.6746}$\tabsd{0.0043}\tabmap{h8} \\
Switch               & $0.44623$\tabsd{0.00019}\tabmap{lin} & $0.2406$\tabsd{0.0028}\tabmap{h8} & $0.5276$\tabsd{0.0003}\tabmap{h4} & $0.6766$\tabsd{0.0038}\tabmap{h8} \\
Hash                 & $0.44707$\tabsd{0.00028} & $0.4465$\tabsd{0.0011} & $0.5574$\tabsd{0.0007} & $0.6899$\tabsd{0.0024} \\
\midrule
\multicolumn{5}{@{}l}{\textit{(b) Tree distance $\downarrow$ / load CV $\downarrow$ of the same arms}} \\
\branch{} (best map) & $\mathbf{0.757}$ / $0.268$ & $0.742$ / $0.112$ & $0.736$ / $0.231$ & $0.749$ / $0.219$ \\
\branch{} (linear)   & $0.813$ / $0.115$ & $\mathbf{0.718}$ / $0.093$ & $\mathbf{0.710}$ / $0.227$ & $\mathbf{0.718}$ / $0.177$ \\
Skywork              & $0.818$ / $0.004$ & $0.817$ / $0.137$ & $0.812$ / $0.183$ & $0.813$ / $0.536$ \\
DeepSeek-V3          & $0.819$ / $0.004$ & $0.813$ / $0.001$ & $0.816$ / $0.002$ & $0.813$ / $0.003$ \\
Switch               & $0.821$ / $0.003$ & $0.819$ / $0.141$ & $0.814$ / $0.104$ & $0.800$ / $0.278$ \\
Hash                 & $0.817$ / $0.002$ & $0.817$ / $0.003$ & $0.817$ / $0.002$ & $0.817$ / $0.004$ \\
\bottomrule
\end{tabular}
\end{table}

\subsection{Results}
\label{sec:results}

\textbf{Quality.\;}
\branch{} is at parity with the strongest baselines (\cref{tab:main}a). On
Criteo the learned routers span $1.9 \times 10^{-4}$ logloss with the linear
map, about half the smallest resolvable difference; at its best map ($h = 4$)
\branch{} has the lowest mean, but its leads over Skywork ($p = 0.63$) and
DeepSeek-V3 ($p = 0.26$) are within seed noise, and DeepSeek-V3 has the higher
AUC. On Covertype \branch{} ($h = 8$) has the lowest mean cross-entropy
($0.2372$); its $0.0034$ margin over Switch is resolved only by the paired test
($t_4 = -2.86$, critical value $2.776$), the equal margin over Skywork by
neither test, and the $0.0049$ margin over DeepSeek-V3 by the paired test
($t_4 = -3.25$). \branch{} ranks second on HIGGS ($0.0009$ behind Skywork,
within seed spread) and fourth on MSD, where the four learned routers span
$0.0021$ against seed sds of $0.0015$--$0.0043$. Hash is the worst MoE arm
everywhere.

\medskip

\noindent\textbf{Locality.\;}
On the three UCI benchmarks \branch{} is the only router materially below random
(\cref{tab:main}b). With the linear map its tree distance is $0.710$--$0.718$,
$12$--$13\%$ below random ($18$--$23\%$ on index distance), and $0.736$--$0.749$
at $h = 8$; every flat router stays within $3.5\%$ of random (closest: Switch on
MSD, $0.800$). On Criteo \branch{} shows no locality: with the linear map its
tree distance ($0.813$) is indistinguishable from random ($p = 0.48$), although
an untrained tree scores $0.725$. The lower values with MLP maps ($^\dagger$)
also appear for the flat Skywork router ($0.75$--$0.77$), so we do not read them
as locality. Criteo differs from UCI in input representation, batch size, and
training length; we have not isolated which matters (\cref{sec:criteo}).

\medskip

\noindent\textbf{Balance.\;}
\branch{} uses no auxiliary loss or bias controller. On UCI its linear-map load
CV ($0.09$--$0.23$) is comparable to that of the auxiliary-loss routers at their
best maps (Switch $0.10$--$0.28$, Skywork $0.14$--$0.54$); on Covertype the only
dead experts are Skywork's at $h = 2$ ($6$ of $80$ slots). On Criteo its CV of
$0.115$ exceeds the flat routers' ($\le 0.006$) but costs nothing measurable: it
drops $0.08\%$ of dispatches.

\begin{figure}[t]
\centering
\begin{tikzpicture}
\begin{axis}[
  name=left, width=0.53\linewidth, height=3.0cm,
  symbolic x coords={lin,h2,h4,h8}, xtick=data,
  ylabel={Covertype CE}, ylabel style={font=\scriptsize, yshift=-4pt},
  tick label style={font=\scriptsize}, title={\scriptsize (a) loss vs.\ node map},
  title style={yshift=-6pt},
  legend style={font=\tiny, at={(0.98,0.98)}, anchor=north east, draw=none, fill=none, row sep=-2pt,
    /tikz/every even column/.append style={column sep=4pt}}, legend columns=2,
  legend cell align=left, mark size=1.3pt, ymin=0.232, ymax=0.300]
\addplot+[thick, error bars/.cd, y dir=both, y explicit] coordinates {(lin,0.25451)+-(0,0.00408) (h2,0.25206)+-(0,0.00487) (h4,0.24394)+-(0,0.00356) (h8,0.23720)+-(0,0.00125)};
\addplot+[error bars/.cd, y dir=both, y explicit] coordinates {(lin,0.25862)+-(0,0.00365) (h2,0.25169)+-(0,0.00324) (h4,0.24277)+-(0,0.00494) (h8,0.24062)+-(0,0.00358)};
\addplot+[error bars/.cd, y dir=both, y explicit] coordinates {(lin,0.26150)+-(0,0.00136) (h2,0.25646)+-(0,0.00713) (h4,0.24286)+-(0,0.00180) (h8,0.24062)+-(0,0.00281)};
\addplot+[error bars/.cd, y dir=both, y explicit] coordinates {(lin,0.27032)+-(0,0.00225) (h2,0.26620)+-(0,0.00301) (h4,0.25417)+-(0,0.00196) (h8,0.24207)+-(0,0.00326)};
\legend{\branch{},Skywork,Switch,DeepSeek-V3}
\end{axis}
\begin{axis}[
  at={(left.east)}, anchor=west, xshift=1.1cm,
  width=0.53\linewidth, height=3.0cm,
  symbolic x coords={lin,h2,h4,h8}, xtick=data,
  ylabel={\branch{} tree dist.}, ylabel style={font=\scriptsize, yshift=-4pt},
  tick label style={font=\scriptsize}, title={\scriptsize (b) locality vs.\ node map},
  title style={yshift=-6pt},
  legend style={font=\tiny, at={(0.5,0.03)}, anchor=south, draw=none, fill=none,
    /tikz/every even column/.append style={column sep=4pt}}, legend columns=3,
  legend cell align=left, mark size=1.3pt, ymin=0.64, ymax=0.83]
\addplot[fill=gray!20, draw=none, forget plot] coordinates {(lin,0.800) (h8,0.800) (h8,0.819) (lin,0.819)} -- cycle;
\addplot[dashed, gray, forget plot] coordinates {(lin,0.817) (h8,0.817)};
\addplot+[thick] coordinates {(lin,0.718) (h2,0.746) (h4,0.748) (h8,0.742)};
\addplot+[] coordinates {(lin,0.710) (h2,0.750) (h4,0.774) (h8,0.736)};
\addplot+[] coordinates {(lin,0.718) (h2,0.746) (h4,0.746) (h8,0.749)};
\legend{Covertype,HIGGS,MSD}
\end{axis}
\end{tikzpicture}
\caption{Node-map effect. (a)~Covertype loss of the learned routers vs.\ node
map, $\pm$sd (Hash, at $0.447$, is off-scale). (b)~\branch{}'s tree
distance vs.\ node map on UCI; dashed: random pairs ($0.817$); shaded: flat
routers at their best maps.}
\label{fig:nodemap}
\end{figure}

\medskip

\noindent\textbf{Node maps.\;}
The node map can matter more than the routing rule (\cref{fig:nodemap}a). On
Covertype, replacing the linear map by an $h = 8$ MLP lowers cross-entropy by
$0.017$--$0.028$ for every learned router (paired $t$ from $-7.99$ to $-21.44$),
$3.5$--$5.8\times$ the largest gap between routers at $h = 8$ ($0.0049$; $0.016$
at linear), while Hash, which ignores the node map, moves by only $0.00013$. The
sign depends on the dataset: MSD agrees; on HIGGS the linear map is best for
Skywork and DeepSeek-V3 and ties for \branch{}; on Criteo no router's loss
changes significantly ($\le 4.1 \times 10^{-4}$, $p > 0.1$) except Switch's,
which fails with every MLP map (${\approx}68\%$ of dispatches dropped). The MLP
also loosens \branch{}'s structure: on UCI its tree distance rises by
$0.024$--$0.031$ (\cref{fig:nodemap}b; same sign on all three datasets, not
resolved on Covertype alone), and on Criteo its load CV rises from $0.115$ to
$0.21$--$0.38$ ($p \le 0.01$ at $h = 4$ and $8$).

\medskip

\noindent\textbf{Takeaway.\;}
Balance without an auxiliary loss and locality on UCI are supported, and quality
is at parity; locality on Criteo and whether MLP maps erode locality remain open
(\cref{sec:app-results}).

\medskip

\noindent\textbf{Conclusion and limitations.}
BRANCH-MoE combines arrival-weighted score anchoring with hierarchical routing to encourage balanced population-level utilization and structured per-token expert selection without an auxiliary balancing loss. Our analysis characterizes anchor tracking, establishes soft-mass anti-collapse for linear routing under log-concave arrivals, relates realized expert-selection frequency to optimization progress, and bounds communication locality under prefix-aligned placement. The experiments demonstrate competitive task quality and structured locality across several benchmarks. The present study is limited to a single BRANCH-MoE layer and therefore does not capture interactions among routing decisions across multiple layers. Future work will extend both the theory and empirical evaluation to multilayer architectures and develop fused, topology-aware implementations for large-scale distributed settings.


\newpage
\subsection*{AI use statement}
In this work, we used generative AI tools to improve the manuscript’s clarity, presentation and writing. We have not used generative AI tools for conducting experiments.
All AI-assisted material was carefully reviewed and verified by the authors. We accept full responsibility for the final content of this work, including all text, claims, and artifacts produced with the assistance of generative AI.
\bibliography{iclr2027_conference}
\bibliographystyle{iclr2027_conference}

\newpage
\appendix







\section{Proofs}
\label{app:proofs}

\begin{proof}[Proof of \cref{thm:anchor-ema-stationary}]
Let $e_t:=\nu_t-\bar\nu$. From
the constant-decay update and \cref{eq:anchor-noise},
\[
  e_{t+1}=\alpha e_t+(1-\alpha)\xi_t.
\]
Taking conditional expectation given $\mathcal F_t$ gives
$\E[e_{t+1}\mid\mathcal F_t]=\alpha e_t$. Iterating proves
\cref{eq:anchor-ema-mean}.

For the second moment,
\begin{align*}
  \E[e_{t+1}^2\mid\mathcal F_t]
  &=
  \alpha^2e_t^2
  +2\alpha(1-\alpha)e_t\E[\xi_t\mid\mathcal F_t]
  +(1-\alpha)^2\E[\xi_t^2\mid\mathcal F_t]\\
  &\le
  \alpha^2e_t^2+(1-\alpha)^2s^2.
\end{align*}
Taking expectations and unrolling the recursion yields
\[
  \E[e_t^2]
  \le
  \alpha^{2t}e_0^2
  +(1-\alpha)^2s^2\sum_{k=0}^{t-1}\alpha^{2k}.
\]
Using
$\sum_{k=0}^{t-1}\alpha^{2k}=(1-\alpha^{2t})/(1-\alpha^2)$
gives \cref{eq:anchor-ema-mse}.
\end{proof}

\begin{proof}[Proof of \cref{prop:anchor-ema-drift}]
Write $\xi_t:=\widehat\nu_t-\bar\nu_t$ and
$e_t:=\nu_t-\bar\nu_t$. Then
\[
  e_{t+1}
  =
  \alpha e_t
  +(\bar\nu_t-\bar\nu_{t+1})
  +(1-\alpha)\xi_t.
\]
Unrolling gives
\begin{align*}
  e_t
  ={}&
  \alpha^t e_0
  +\sum_{k=0}^{t-1}
    \alpha^{t-1-k}(\bar\nu_k-\bar\nu_{k+1})\\
  &+(1-\alpha)
  \sum_{k=0}^{t-1}\alpha^{t-1-k}\xi_k.
\end{align*}
The absolute value of the deterministic first line is at most
\[
  \alpha^t|e_0|+
  v\sum_{k=0}^{t-1}\alpha^{t-1-k}
  =
  \alpha^t|e_0|+\frac{v(1-\alpha^t)}{1-\alpha}.
\]
The martingale-difference property makes distinct noise terms orthogonal and
eliminates their cross term with the deterministic part. Hence the noise
variance is at most
\[
  (1-\alpha)^2s^2
  \sum_{k=0}^{t-1}\alpha^{2k}
  =
  \frac{1-\alpha}{1+\alpha}
  (1-\alpha^{2t})s^2.
\]
Combining the two bounds proves \cref{eq:anchor-drift-bound}.
\end{proof}

\begin{proof}[Proof of \cref{lem:soft-logconcavity}]
Along the path to node $(d,j)$, its mask can be written as
\[
  m_j^{(d)}(x)
  =
  \prod_{u\in\pathset((d,j))}
  \sigma(s_u z_u(x)),
\]
where $s_u\in\{-1,+1\}$ selects the appropriate branch and $z_u(x)$ is an
affine centered score. The function
$y\mapsto\log\sigma(y)=-\log(1+\exp(-y))$ is concave. Therefore each factor
$x\mapsto\sigma(s_u z_u(x))$ is log-concave, and so is their product. The
unnormalized arrival density is the product of this mask and the log-concave
input density, hence is log-concave. Normalization preserves log-concavity.
\end{proof}

\begin{proof}[Proof of \cref{cor:linear-soft-anticollapse}]
Let $\mu_j^{(d)}:=\E_{\mathcal D_j^{(d)}}[X]$. Linearity gives
\[
  \nu_j^{(d),\star}
  =
  \bigl(W_j^{(d)}\bigr)^\top\mu_j^{(d)},
\]
so the population-centered score is
\[
  Z
  =
  \bigl(W_j^{(d)}\bigr)^\top
  (X-\mu_j^{(d)}).
\]
Gr\"unbaum's inequality for log-concave measures implies
\[
  \Pp(Z\ge0)\ge\frac1{\mathrm e},
  \qquad
  \Pp(Z\le0)\ge\frac1{\mathrm e}.
\]
At the exact population anchor,
$\pi_{j,+}^{(d)}=\E[\sigma(Z)]$ and
$\pi_{j,-}^{(d)}=\E[\sigma(-Z)]$, where both expectations are under
$\mathcal D_j^{(d)}$. Since $\sigma(y)\ge1/2$ for $y\ge0$, each expectation
is at least $1/(2\mathrm e)$, proving
\cref{eq:linear-soft-anticollapse}.

For an arbitrary active anchor, the two expectations become
$\E[\sigma(Z-\delta_j^{(d)})]$ and
$\E[\sigma(\delta_j^{(d)}-Z)]$. They are bounded below by one half of
$\Pp(Z\ge\delta_j^{(d)})$ and
$\Pp(Z\le\delta_j^{(d)})$, respectively. If the density of $Z$ is bounded
by $A_j^{(d)}$, moving the threshold from zero to
$\delta_j^{(d)}$ changes either tail probability by at most
$A_j^{(d)}|\delta_j^{(d)}|$. Combining this fact with Gr\"unbaum's bounds
and truncating the resulting lower bound at zero proves
\cref{eq:linear-soft-anticollapse-error}.
\end{proof}

\begin{proof}[Proof of \cref{thm:expert-progress-current}]
Write $u_t=\theta_{e,t}-\theta_e^\star$. Conditional on $\mathcal H_t$,
unbiasedness and \cref{eq:expert-sgd-update-current} imply
\begin{align*}
  \E[\|u_{t+1}\|^2\mid\mathcal H_t]
  =&\;\|u_t\|^2
  -2p_e\gamma_e
    \langle u_t,\nabla\mathcal L_e(\theta_{e,t})\rangle\\
  &+p_e\gamma_e^2
    \E[\|g_{e,t}\|^2\mid\mathcal H_t,I_{e,t}=1].
\end{align*}
Strong convexity gives
$\langle u_t,\nabla\mathcal L_e(\theta_{e,t})\rangle
\ge\mu_e\|u_t\|^2$. Smoothness, optimality of $\theta_e^\star$, and
\cref{eq:expert-gradient-assumption-current} give
\[
  \E[\|g_{e,t}\|^2\mid\mathcal H_t,I_{e,t}=1]
  \le\|\nabla\mathcal L_e(\theta_{e,t})\|^2+\sigma_e^2.
\]
Because a differentiable convex $L_e$-smooth function has an
$L_e^{-1}$-cocoercive gradient,
$\|\nabla\mathcal L_e(\theta_{e,t})\|^2
\le L_e\langle u_t,\nabla\mathcal L_e(\theta_{e,t})\rangle$.
Therefore, using $\gamma_e\le1/L_e$,
\[
  \E[\|u_{t+1}\|^2\mid\mathcal H_t]
  \le(1-p_e\gamma_e\mu_e)\|u_t\|^2
  +p_e\gamma_e^2\sigma_e^2.
\]
Iteration of this recursion and summation of the geometric series yield
\[
  \E\|u_T\|^2
  \le(1-p_e\gamma_e\mu_e)^T\|u_0\|^2
  +\frac{\gamma_e\sigma_e^2}{\mu_e}.
\]
Finally, $1-x\le\exp(-x)$ gives the second line of
\cref{eq:expert-convergence-current}. If $p_e\ge p_{\min}$, replacing $p_e$
by $p_{\min}$ in the exponential transient and solving for $T$ gives
\cref{eq:expert-iteration-complexity}.
\end{proof}
\begin{proof}[Proof of \cref{thm:learned-margin-communication-current}]
The leaf masks below $u_r=(r,j_r)$ sum to its incoming mask:
\begin{equation}
  \sum_{e\in\mathcal L(u_r)}m_{e,t}^{(D)}(x)
  =
  m_{j_r,t}^{(r)}(x).
  \label{eq:subtree-mass-conservation-current}
\end{equation}
This follows by induction from the cascade, because the two children of a
node with incoming mask $m$ receive masses $m(1-p)$ and $mp$, whose sum is
$m$.

Along the preferred path, the probability assigned at level $i$ is
$\sigma(\gamma_i(x))$. Therefore,
\[
  \sum_{e\in\mathcal L(u_r)}m_{e,t}^{(D)}(x)
  =
  m_{j_r,t}^{(r)}(x)
  =
  \prod_{i=0}^{r-1}\sigma(\gamma_i(x)).
\]
Since the full leaf mask sums to one,
\[
  \varepsilon_r(x)
  =
  1-\prod_{i=0}^{r-1}\sigma(\gamma_i(x)).
\]
For numbers $a_i\in[0,1]$,
$1-\prod_i(1-a_i)\le\sum_i a_i$. Taking
$a_i=\sigma(-\gamma_i(x))$ gives
\[
  \varepsilon_r(x)
  \le
  \sum_{i=0}^{r-1}\sigma(-\gamma_i(x))
  \le
  \sum_{i=0}^{r-1}e^{-\gamma_i(x)},
\]
which proves \cref{eq:prefix-mass-concentration-current}.

Let
\[
  h_r(x)
  :=
  |\mathcal S_K(x)\setminus\mathcal L(u_r)|.
\]
Every selected leaf has probability at least $\tau_K(x)$, and hence
\[
  \varepsilon_r(x)
  \ge
  \sum_{e\in\mathcal S_K(x)\setminus\mathcal L(u_r)}
  m_{e,t}^{(D)}(x)
  \ge
  h_r(x)\tau_K(x).
\]
Because $h_r(x)$ is integer, this proves
\cref{eq:number-selected-outside-current}.

All selected experts inside $\mathcal L(u_r)$ occupy one domain, and every
selected expert outside it can add at most one further domain. Thus
$N_r(x)\le1+h_r(x)$, while trivially $N_r(x)\le K$. This proves
\cref{eq:communication-domain-bound-current}.

Finally, \cref{eq:single-domain-margin-condition-current} implies
$\varepsilon_r(x)<\tau_K(x)$. If a selected expert lay outside
$\mathcal L(u_r)$, its probability would be at least $\tau_K(x)$, which
would contradict the definition of $\varepsilon_r(x)$. Hence every selected
expert lies in the same depth-$r$ domain.
\end{proof}

\section{Experimental Details}
\label{sec:app-setup}

\subsection{Datasets}
\label{sec:app-datasets}
\Cref{tab:datasets} summarizes the four benchmarks. Criteo is the primary
large-scale task; the three UCI datasets are tabular benchmarks that admit
many-seed replication at modest cost.

\begin{table}[t]
\centering
\small
\caption{Benchmark suite. All MoE conditions use $E = 16$ experts with
top-$4$ routing. ``Examples'' is the full corpus size (train $+$ held-out);
``Seeds'' is the number of independent replications per condition.}
\label{tab:datasets}
\begin{tabular}{llrrrr}
\toprule
Dataset & Task & Examples & Features & Seeds & Metric \\
\midrule
Criteo (small)       & CTR (binary)        & ${\approx}41.5$M\footnotemark  & $39$ fields & $5$ & Logloss / AUC \\
Forest Covertype     & $7$-class           & $581$K  & $54$        & $5$ & Cross-entropy / Acc.\ \\
HIGGS                & Binary              & $600$K  & $28$        & $5$ & Logloss / AUC \\
YearPredictionMSD    & Regression          & $515$K  & $90$        & $5$ & MSE / RMSE (yr) \\
\bottomrule
\end{tabular}
\end{table}
\footnotetext{The held-out split is exactly $4.42$M examples (one full
evaluation pass of $2{,}158$ steps at batch $2048$). The training split is not
counted directly: we infer ${\approx}37$M from the ratio of on-disk sizes
($5.93$\,GB train against $0.71$\,GB held-out), which assumes a common mean
record size across the two splits. All reported results depend on the training
\emph{step} count, not on this figure, so the corpus size should be read as
approximate.}

\paragraph{Criteo.} Each example comprises $13$ integer (count) features and
$26$ categorical features, for $39$ fields in total, with a binary click label.
Categorical fields are mapped to integer IDs by a string lookup built from a
vocabulary with a minimum-frequency cutoff of $100$; post-cutoff vocabulary
sizes $V$ range from $4$ to $20{,}136$. Integer fields are discretized into $41$
bins by the boundaries $b_i = 1.5^{\,i-1} - 0.49$ for $i = 0,\dots,39$,
converting magnitude into an ordered categorical ID. Embeddings are not shared
across fields: integer fields use width $10$, categorical fields use
$d_f = \min(\lfloor 6V_f^{1/4}\rfloor, V_f) \in [4, 71]$, and the concatenation
yields a $1137$-dimensional input.

\paragraph{UCI benchmarks.} Forest Covertype ($54$ cartographic features, $7$
cover classes, $80/20$ split) and HIGGS ($28$ particle-physics features, the
first $600$K of the $11$M-event corpus, $500$K/$100$K split) are standardized.
YearPredictionMSD ($90$ timbre statistics) uses the canonical
$463{,}715/51{,}630$ split, which prevents artist leakage; its features are
likewise standardized, the target is the standardized release year, and we
report unstandardized RMSE in years alongside the MSE objective. HIGGS and MSD
inputs are additionally clipped at $\pm 10\sigma$; Covertype inputs are not.

On Forest Covertype one preprocessing detail proves critical: $40$ of the $54$
features are binary soil-type indicators, the rarest firing on only $2$ of the
$464{,}809$ training rows. Standard $z$-scoring with an $\epsilon = 10^{-7}$
denominator guard maps a column that is constant in the sample used to fit the
normalizer to spikes of magnitude $\sim 10^{7}$ on the rows where it does fire,
which corrupts running statistics and induces bimodal expert
collapse in tree routers. Pinning any column with adapted variance below
$10^{-4}$ to unit variance eliminates the collapse and reduces the seed-to-seed
accuracy spread of \branch{} from $6.2$ to $0.46$ percentage points.

\subsection{Baselines}
\label{sec:app-baselines}
We benchmark against four representative MoE
routing paradigms, covering auxiliary-loss, auxiliary-loss-free, normalization-based,
and deterministic mechanisms.
\begin{enumerate}[leftmargin=*]
    \item \textbf{Top-$K$ Softmax Routing (Switch).} The canonical learned
    paradigm~\citep{fedus2022switch}: a learned gating matrix, a softmax, and
    top-$K$ dispatch, with multiplicative uniform jitter
    $[1-\epsilon, 1+\epsilon]$, $\epsilon = 10^{-2}$, during training. Load
    balance is enforced by the auxiliary loss
    $\mathcal{L}_{\text{balance}} = \alpha E \sum_e f_e P_e$ with
    $\alpha = 0.01$.
    \item \textbf{Auxiliary-Loss-Free Routing (DeepSeek-V3).} Affinities are
    computed with an independent sigmoid per expert, so experts do not compete
    for probability mass~\citep{deepseekai2024deepseekv3}. A non-trainable
    additive bias $b_e$ enters top-$K$ selection only and is updated after each
    step by a fixed increment $\gamma = 10^{-3}$ in the direction that relieves
    congestion; the combining gates remain the unbiased affinities.
    \item \textbf{Gating Logit Normalization (Skywork).} Logits are
    standardized across the expert dimension before the softmax, which
    counteracts representation collapse and sharpens the gate
    distribution~\citep{wei2024skywork}. A fixed auxiliary balancing
    coefficient is retained. We include this baseline because it proves to be
    the strongest competitor on two of the four benchmarks and ties Switch as
    the runner-up on a third.
    \item \textbf{Deterministic Balanced Routing (Hash).} A fixed assignment
    mapping each example to experts by a $32$-bit avalanche hash of its feature
    vector, following Hash Layers~\citep{roller2021hash}. The assignment is
    uniform by construction but carries no dependence on content beyond the
    hash, isolating the value of \emph{learned} gating against a perfectly
    balanced but semantically uninformed one, at zero router parameter cost.
    \item \textbf{\branch{} (ours).} The $E$ experts are the leaves of a
    balanced binary tree of depth $\log_2 E$. At internal node $n$ the
    probability of branching right is
    $\sigma(f_n(\mathbf{x}) - \mu_n)$. During training $\mu_n$ is the
    arrival-weighted mean score of the current batch at $n$; at inference it is
    $\nu_n$, a non-trainable exponential moving average (decay $0.99$) of that
    batch mean, updated at every training step. Centering each split on the
    mean score of its own traffic places the decision boundary through the
    middle of the distribution actually arriving there, keeping both children
    populated without a balancing penalty. Leaf scores are products of branch
    probabilities along each root-to-leaf path; all $E - 1$ node scores are
    evaluated, and the products are formed in $\log_2 E$ vectorized steps.
\end{enumerate}

\subsection{Node Maps, Architecture, and Protocol}
\label{sec:app-protocol}

\paragraph{Node maps.}
\label{sec:nodemap}
The score $f_n$ at each node, and correspondingly the logit map of each flat
router, is one of two forms, swept as a factor in its own right:
\begin{itemize}[leftmargin=*]
    \item \textbf{Linear:} $f_n(\mathbf{x}) = \mathbf{w}_n^{\top}\mathbf{x}$.
    \item \textbf{MLP:} a two-layer network of hidden width
    $h \in \{2, 4, 8\}$ per node, $f_n(\mathbf{x}) = \mathbf{v}_n^{\top}\mathrm{ReLU}(\mathbf{U}_n\mathbf{x} + \mathbf{c}_n)$.
\end{itemize}
This factor is swept across all four learned routers and the Hash control on
the three UCI benchmarks, and across all four learned routers on Criteo.


\paragraph{Architecture and protocol.}
\label{sec:protocol}

 To isolate routing, we adopt a minimal backbone: an input stage, a single MoE block, and a linear head. On Criteo, the input stage projects the $39$ embeddings to $256$ dimensions, followed by SiLU and layer normalization; on the UCI datasets, it is a feature-standardization layer alone with no projection. The MoE block has no residual connection. Within this block, each expert is a SwiGLU network defined as $\mathrm{down}(\mathrm{SiLU}(\mathrm{gate}(\mathbf{x})) \odot \mathrm{up}(\mathbf{x}))$, where the $\mathrm{up}$ layer expands the input into a higher-dimensional space and the $\mathrm{down}$ layer compresses it back to the baseline hidden size. Finally, a post-MoE layer normalization precedes the head on Criteo, HIGGS, and MSD (the Covertype model has none).

All conditions use $E = 16$
and top-$4$ routing with capacity factor $1.25$, and differ only
in the gate $g(\cdot)$, so comparisons are at matched capacity. Criteo trains
for $37{,}500$ steps at batch $2048$ with Adam ($\text{lr} = 5\times 10^{-4}$),
organized as $15$ evaluation intervals of $2{,}500$ steps; since an interval
covers $5.12$M examples against a ${\approx}37$M training split, this is
${\approx}2.1$ passes over the data rather than $15$. The UCI datasets train
for $20$ true epochs with Adam ($\text{lr} = 10^{-3}$) at batch $512$
(Covertype) or $256$ (HIGGS, MSD). Every condition is replicated over $5$ seeds. On the
UCI benchmarks the seed governs initialization but not the data order, which
is shuffled without a seed; all arms within a dataset share the identical seed
set, so comparisons are \emph{paired} by seed. On Criteo the seed likewise
governs initialization only, and TPU execution is not bitwise
reproducible, so Criteo comparisons are unpaired (two-sided Welch $t$-tests).
Re-running an identical Criteo configuration with the same seed changes
held-out logloss about as much as changing the seed does (standard deviation
$1.5$--$2.8 \times 10^{-4}$ within a seed against $1.9$--$3.0 \times 10^{-4}$
across seeds).

\subsection{Locality Metrics}
\label{sec:locality-metrics}
For each token routed to experts $(i, j)$ we measure two index-based distances,
averaged over all $\binom{k}{2}$ co-selected pairs.
\begin{enumerate}[leftmargin=*]
    \item \textbf{Tree distance $L$.} Reading an expert index as a $D$-bit
    root-to-leaf path ($D = \log_2 E$), the number of levels to the lowest
    common ancestor is
    $L(i,j) = \mathrm{bit\_length}(i \oplus j) \in \{1, \dots, D\}$. Under block
    placement across $P = 2^m$ devices, two experts share a device if and only
    if $L(i,j) \le D - m$, making $L$ an exact topological proxy for
    cross-device fanout.
    \item \textbf{Index gap $|i - j|$.} The linear gap between expert indices.
\end{enumerate}
Both require a reference: for uniformly random \emph{distinct} pairs,
$\E[L] = (D - 1 + 2^{-D})/(1 - 2^{-D})$ and $\E[|i-j|] = (E+1)/3$, giving
$3.2667$ and $5.6667$ at $E = 16$ ($D = 4$). Normalized by $D$ and $E - 1$ these
are $0.8167$ and $0.3778$ --- random routing already scores $0.82$ on the tree
metric, so locality is only evidenced \emph{below} these values, and an
unreferenced number is uninterpretable. We also report the load coefficient of
variation $\mathrm{CV} = \mathrm{std}(f)/\mathrm{mean}(f)$ of the per-expert
dispatch distribution.

Tree distance is also the more sensitive of the two index-based measures: it
separates power-of-two boundary pairs such as $(7, 8)$, which have
$|i - j| = 1$ but lie at maximal tree separation, and which the index gap
therefore scores as maximally local when they are maximally remote.

\section{Additional Results}
\label{sec:app-results}

\subsection{Criteo: Click-Through-Rate Prediction}
\label{sec:criteo}

\begin{table}[t]
\centering
\small
\caption{Criteo, $E = 16$, top-$4$, capacity factor $1.25$, $5$ seeds (mean
$\pm$ std). Logloss is binary cross-entropy; logloss and AUC are the best
held-out values over the $15$ evaluations. Load CV and the normalized tree and
index distances are training-time routing statistics; random baselines are
$0.8167$ and $0.3778$. Upper block: every router with the linear node map.
Lower block: the two routers whose lowest-loss node map is an MLP, at that map
(for Skywork and Switch it is the linear map). Bold marks the best logloss and
AUC. At $5$ vs.\ $5$ seeds, differences below ${\approx}3.7 \times 10^{-4}$ in
logloss or ${\approx}2.7 \times 10^{-4}$ in AUC are not resolvable (Welch
$t$-test, $p < 0.05$).}
\label{tab:criteo}
\begin{tabular}{lccccc}
\toprule
Method & Best val.\ logloss $\downarrow$ & Best val.\ AUC $\uparrow$ & Load CV $\downarrow$ & Tree $\downarrow$ & Index $\downarrow$ \\
\midrule
\multicolumn{6}{l}{\emph{Linear node map}} \\
Skywork          & $\mathbf{0.446034}$ & $\mathbf{0.805827}$ & $0.0040$ & $0.8184$ & $\mathbf{0.3778}$ \\
                 & $\pm 0.000298$ & $\pm 0.000184$ & & & \\
DeepSeek-V3      & $0.446082$ & $0.805799$ & $0.0057$ & $0.8188$ & $0.3849$ \\
                 & $\pm 0.000270$ & $\pm 0.000174$ & & & \\
\branch{} (ours) & $0.446149$ & $0.805707$ & $0.1152$ & $\mathbf{0.8126}$ & $0.3843$ \\
                 & $\pm 0.000250$ & $\pm 0.000175$ & & & \\
Switch softmax   & $0.446226$ & $0.805595$ & $0.0025$ & $0.8212$ & $0.3812$ \\
                 & $\pm 0.000185$ & $\pm 0.000204$ & & & \\
Hash             & $0.447065$ & $0.804716$ & $\mathbf{0.0023}$ & $0.8167$ & $\mathbf{0.3778}$ \\
                 & $\pm 0.000277$ & $\pm 0.000116$ & & & \\
\midrule
\multicolumn{6}{l}{\emph{Best MLP node map}} \\
\branch{} ($h{=}4$)   & $\mathbf{0.445959}$ & $\mathbf{0.805865}$ & $0.2676$ & $0.7572$ & $0.3255$ \\
                      & $\pm 0.000153$ & $\pm 0.000022$ & & & \\
DeepSeek-V3 ($h{=}8$) & $0.446049$ & $0.805970$ & $0.0038$ & $0.8190$ & $0.3824$ \\
                      & $\pm 0.000047$ & $\pm 0.000100$ & & & \\
\bottomrule
\end{tabular}
\end{table}

\Cref{tab:criteo} reports the benchmark. With the common linear node map, the
four learned routers are statistically tied: their mean held-out logloss spans
$1.9 \times 10^{-4}$, about half of the smallest resolvable difference, and
\branch{} ranks third, $1.1 \times 10^{-4}$ behind Skywork (Welch $p = 0.53$).
When each router uses its best node map, \branch{} with $h = 4$ has the lowest
mean logloss, but its margins over Skywork ($7.5 \times 10^{-5}$, $p = 0.63$)
and DeepSeek-V3 at $h = 8$ ($9.0 \times 10^{-5}$, $p = 0.26$) are well inside
seed noise; choosing the best of four node maps after the fact also favours
it, and DeepSeek-V3 at $h = 8$ has the higher AUC ($+1.05 \times 10^{-4}$,
$p = 0.08$). We therefore claim parity, not superiority. The only unambiguous
gap is to Hash, which trails every other row of the table by
$0.0008$--$0.0011$ in logloss and $0.0009$--$0.0013$ in AUC (Welch
$p < 0.002$ in every comparison). All Criteo tests are unpaired, because TPU execution is not
bitwise reproducible (\cref{sec:protocol}).

\paragraph{Balance.}
\branch{} keeps every expert in use without an auxiliary loss or bias
controller, but balances load less tightly than the flat routers: with the
linear node map its load CV is $0.115$ (largest and smallest expert shares
$1.17\times$ and $0.72\times$ the mean), against at most $0.006$ for every flat
router. At capacity factor $1.25$ this has no measurable cost: every
linear-map condition drops at most $0.11\%$ of dispatches at capacity, and
\branch{} drops $0.08\%$.

\paragraph{Locality does not appear on Criteo.}
With the linear node map, \branch{}'s normalized tree distance is
$0.813 \pm 0.012$, indistinguishable from the $0.8167$ random baseline
(one-sample $t$-test, $p = 0.48$), and its index distance ($0.384$) is
marginally \emph{above} random ($p = 0.04$). The tree does not start at
random: an untrained \branch{} scores $0.725$, and training moves it to the
random value. The flat routers stay within $2\%$ of both random baselines with
linear node maps, as expected for arbitrary expert labels. Lower tree
distances do appear with MLP node maps ($0.757$ for \branch{} at $h = 4$), but
they vary strongly across seeds, are lowest in the most imbalanced runs, and
appear also for the flat Skywork router ($0.75$--$0.77$, $p \le 0.05$ at every
width), whose expert indices carry no meaning. We have not diagnosed the
Skywork effect; it points to an index-order artifact in expert selection, such
as tie-breaking among exactly equal logits, and we read neither result as
locality. Criteo is therefore the one benchmark on which the locality of
\cref{sec:locality-results} does not replicate. It differs from the UCI
benchmarks in input representation (learned embeddings of categorical
fields), batch size ($2048$ against $256$--$512$), and training length
(${\approx}2.1$ passes over the data rather than $20$ epochs); we have not
isolated which of these is responsible.

\subsection{Forest Covertype: Multi-Class Classification}
\label{sec:covtype}

\begin{table}[t]
\centering
\small
\caption{Forest Covertype, $E = 16$, top-$4$, $20$ epochs, batch $512$,
$5$ paired seeds (mean $\pm$ std). $h$ is the MLP node-map hidden width
(\cref{sec:nodemap}). Hash ignores the node map by construction; its
four arms are the same configuration run four times and serve as an exact
negative control.}
\label{tab:covtype}
\begin{tabular}{lcccc}
\toprule
 & \multicolumn{4}{c}{Best val.\ cross-entropy $\downarrow$} \\
\cmidrule(lr){2-5}
Method & linear & $h{=}2$ & $h{=}4$ & $h{=}8$ \\
\midrule
\branch{} (ours) & $\mathbf{0.25451}$ & $0.25206$ & $0.24394$ & $\mathbf{0.23720}$ \\
                 & $\pm 0.00408$ & $\pm 0.00487$ & $\pm 0.00356$ & $\pm 0.00125$ \\
Skywork          & $0.25862$ & $\mathbf{0.25169}$ & $\mathbf{0.24277}$ & $0.24062$ \\
                 & $\pm 0.00365$ & $\pm 0.00324$ & $\pm 0.00494$ & $\pm 0.00358$ \\
Switch softmax   & $0.26150$ & $0.25646$ & $0.24286$ & $0.24062$ \\
                 & $\pm 0.00136$ & $\pm 0.00713$ & $\pm 0.00180$ & $\pm 0.00281$ \\
DeepSeek-V3      & $0.27032$ & $0.26620$ & $0.25417$ & $0.24207$ \\
                 & $\pm 0.00225$ & $\pm 0.00301$ & $\pm 0.00196$ & $\pm 0.00326$ \\
Hash (control)   & $0.44665$ & $0.44771$ & $0.44707$ & $0.44652$ \\
                 & $\pm 0.00087$ & $\pm 0.00084$ & $\pm 0.00162$ & $\pm 0.00108$ \\
\bottomrule
\end{tabular}
\end{table}

\Cref{tab:covtype} reports the node-map sweep. \branch{} with $h = 8$ is the
best arm overall ($0.23720$ cross-entropy, $90.21\%$ accuracy, best on both),
but Skywork and Switch tie for second at the same $h$ ($0.24062$ each), and
the margin of $-0.00342$ over them is not clean. Against Switch a paired
$t$-test resolves it only marginally ($t_4 = -2.86$ against a critical value of
$2.776$) and the unpaired Welch test does not ($t = -2.49$); against Skywork
neither test resolves it (paired $t_4 = -1.64$, Welch $t = -2.01$). The
margin over DeepSeek-V3 at $h = 8$ is cleaner ($-0.00487$, paired
$t_4 = -3.25$). With linear node maps \branch{} again has the lowest mean: its
gap to Switch is larger and resolved ($-0.00699$, paired $t_4 = -3.25$), but
its gap to Skywork, the runner-up, is not ($-0.00411$, paired $t_4 = -1.26$).
At $h = 2$ and $h = 4$ Skywork has the lowest mean, by margins of $0.0004$ and
$0.0001$ that are far inside seed noise.

The four Hash arms are four runs of one identical configuration, so the spread
of their seed means isolates pure run-to-run nondeterminism:
$\mathrm{sd} = 0.00053$. The \branch{}-vs-Switch gap at $h = 8$ is well above
this floor yet not resolved by the unpaired test, which is precisely why
pairing is required to see it at all; the equal gap to Skywork is not resolved
even with pairing.

On balance \branch{} attains $\mathrm{CV} = 0.093$ with a $1.174\times$ peak
load at linear ($0.112$ / $1.240\times$ at $h = 8$), against $0.121$ /
$1.210\times$ for Switch and $0.248$ / $1.366\times$ for Skywork, the least
balanced learned router at linear ($0.137$ / $1.247\times$ at $h = 8$); DeepSeek-V3's
controller is tighter still ($0.001$ / $1.002\times$), as expected from a
mechanism that optimizes balance directly. The only dead experts (below $1\%$
of uniform load) are Skywork's at $h = 2$: $6$ of its $80$ expert slots across
the five seeds.

\subsection{HIGGS and YearPredictionMSD}
\label{sec:se-wins}

\begin{table}[t]
\centering
\small
\caption{HIGGS (logloss) and YearPredictionMSD (MSE), $E = 16$, top-$4$,
$5$ paired seeds (mean $\pm$ std). Each router is shown at its best node map,
given in parentheses; for \branch{} we show both $h = 8$ (best loss) and linear
(best locality), since the two differ by $0.0003$ on HIGGS. Tree and index
distances are normalized against the $0.8167$ / $0.3778$ random baselines.
Hash ignores the node map, so its four arms are one configuration run four
times; the best is shown.}
\label{tab:higgs-msd}
\begin{tabular}{lcccccc}
\toprule
 & \multicolumn{3}{c}{HIGGS} & \multicolumn{3}{c}{YearPredictionMSD} \\
\cmidrule(lr){2-4}\cmidrule(lr){5-7}
Method & Logloss $\downarrow$ & Tree $\downarrow$ & Index $\downarrow$ & MSE $\downarrow$ & Tree $\downarrow$ & Index $\downarrow$ \\
\midrule
Skywork          & $\mathbf{0.5247}$ \tiny{(lin)} & $0.812$ & $0.370$ & $0.6757$ \tiny{($h{=}2$)} & $0.813$ & $0.365$ \\
                 & $\pm 0.0005$ & & & $\pm 0.0015$ & & \\
\branch{} ($h{=}8$) & $0.5256$ & $0.736$ & $0.322$ & $0.6767$ & $0.749$ & $0.324$ \\
                 & $\pm 0.0014$ & & & $\pm 0.0027$ & & \\
\branch{} (linear)  & $0.5259$ & $\mathbf{0.710}$ & $\mathbf{0.292}$ & $0.6843$ & $\mathbf{0.718}$ & $\mathbf{0.301}$ \\
                 & $\pm 0.0008$ & & & $\pm 0.0035$ & & \\
DeepSeek-V3      & $0.5264$ \tiny{(lin)} & $0.816$ & $0.380$ & $\mathbf{0.6746}$ \tiny{($h{=}8$)} & $0.813$ & $0.371$ \\
                 & $\pm 0.0013$ & & & $\pm 0.0043$ & & \\
Switch softmax   & $0.5276$ \tiny{($h{=}4$)} & $0.814$ & $0.374$ & $0.6766$ \tiny{($h{=}8$)} & $0.800$ & $0.365$ \\
                 & $\pm 0.0003$ & & & $\pm 0.0038$ & & \\
Hash             & $0.5574$ \tiny{($h{=}4$)} & $0.817$ & $0.378$ & $0.6899$ \tiny{($h{=}4$)} & $0.817$ & $0.378$ \\
                 & $\pm 0.0007$ & & & $\pm 0.0024$ & & \\
\bottomrule
\end{tabular}
\end{table}

\Cref{tab:higgs-msd} reports the two remaining benchmarks, and the outcome is
different. On HIGGS \branch{} ranks second behind Skywork
($+0.0009$, within the combined seed spread); on MSD it ranks fourth, behind
DeepSeek-V3, Skywork, and Switch, though the four are separated by only
$0.0021$ MSE --- $0.014$ years of RMSE --- against seed standard deviations of
$0.0015$--$0.0043$. The honest summary is that on these two datasets all
learned routers are statistically tied, and \branch{} is not the best point
estimate.

\subsection{Routing Locality Across the Suite}
\label{sec:locality-results}
The locality prediction of \cref{sec:communication-locality} is the claim that survives
most cleanly. On all three UCI benchmarks \branch{} is the \emph{only} router
whose co-routing statistics depart materially from the random baseline
(\cref{tab:locality}).

\begin{table}[t]
\centering
\small
\caption{Normalized co-routing distances ($E = 16$, top-$4$; mean over $5$
seeds). Flat routers: range over Switch,
DeepSeek-V3, Skywork, and Hash, each at the arm shown in \cref{tab:main}.
Bold: most local.}
\label{tab:locality}
\begin{tabular}{@{}lcccccc@{}}
\toprule
 & \multicolumn{2}{c}{\branch{} (linear)} & \multicolumn{2}{c}{\branch{} ($h{=}8$)} & \multicolumn{2}{c}{Flat routers} \\
\cmidrule(lr){2-3}\cmidrule(lr){4-5}\cmidrule(lr){6-7}
Dataset & Tree & Index & Tree & Index & Tree & Index \\
\midrule
Forest Covertype      & $\mathbf{0.718}$ & $\mathbf{0.312}$ & $0.742$ & $0.334$ & $0.813$--$0.819$ & $0.371$--$0.385$ \\
HIGGS                 & $\mathbf{0.710}$ & $\mathbf{0.292}$ & $0.736$ & $0.322$ & $0.812$--$0.817$ & $0.370$--$0.380$ \\
YearPredictionMSD     & $\mathbf{0.718}$ & $\mathbf{0.301}$ & $0.749$ & $0.324$ & $0.800$--$0.817$ & $0.365$--$0.378$ \\
Criteo                & $0.813$          & $0.384$          & $0.804$ & $0.370$ & $0.817$--$0.821$ & $0.378$--$0.382$ \\
\midrule
Random baseline       & $0.817$ & $0.378$ & $0.817$ & $0.378$ & $0.817$ & $0.378$ \\
\bottomrule
\end{tabular}
\end{table}

On the three UCI datasets the reduction is $12$--$13\%$ below baseline on tree
distance for linear \branch{} ($8$--$10\%$ at $h = 8$) and $18$--$23\%$ on
index distance ($12$--$15\%$ at $h = 8$), while every flat router sits within
$3.5\%$ of baseline on both --- the negative control behaving exactly as
expected, since for a flat router the expert index is an arbitrary label.
Criteo is the exception: there \branch{} stays at the random
baseline, and with MLP node maps even the flat Skywork router falls below it,
an artifact discussed in \cref{sec:criteo}.

\if false
\subsection{The Node Map Can Matter More Than the Routing Rule}
\label{sec:nodemap-results}
On Forest Covertype the largest effect is not the choice of routing algorithm:
replacing the linear node map with an $h = 8$ MLP improves
every learned router by $0.017$--$0.028$ cross-entropy --- $3.7$--$6.1$ times
the unpaired noise floor of \cref{sec:covtype}, with paired $t$ from $-7.99$ to $-21.44$ --- while the
largest gap \emph{between} routers is $0.0049$ at $h = 8$ ($0.016$ with linear
maps). The ordering
linear $<$ $h{=}2$ $<$ $h{=}4$ $<$ $h{=}8$ holds without exception, so $h = 8$
is not yet saturating. The Hash control moves by $-0.00013$ ($t = -0.19$) and
its load and locality statistics are bit-identical across arms, confirming the
gain is bought by the routing decision and not by the added parameters
elsewhere.

The sign is nonetheless dataset-dependent. MSD agrees with Forest Covertype
(MLP wins for all four learned routers; $-0.0076$ for \branch{}), but HIGGS
reverses it: linear is the best map for Skywork and DeepSeek-V3, and for
\branch{} the linear and $h = 8$ arms are within $0.0003$. HIGGS has $28$
features against $90$ for MSD and $54$ for Covertype, which is consistent with
--- but does not establish --- a story in which the MLP earns its parameters
only when the input is wide enough to compress.

Criteo shows a third pattern. For DeepSeek-V3, Skywork, and \branch{},
replacing the linear node map with an MLP moves held-out logloss by at most
$4.1 \times 10^{-4}$, and none of these differences is significant
($p > 0.1$); \branch{}'s best map, $h = 4$, is $1.9 \times 10^{-4}$ below
linear ($p = 0.19$). Criteo's $256$-dimensional MoE input is the widest in the
suite, so this null result does not fit the width story above. Switch, in
contrast, fails with every MLP node map: all fifteen runs drop
$67.5$--$68.5\%$ of expert dispatches at capacity during training --- close to
the $68.75\%$ expected if every token in a batch selected the same four
experts --- and lose $0.0035$--$0.0046$ logloss relative to the linear map. We
have not diagnosed this failure and treat these arms as a failed configuration
rather than as a routing comparison.

\paragraph{The MLP consistently loosens what makes \branch{} distinctive.}
Across all three UCI datasets, moving \branch{} from a linear node map to
$h = 8$ shifts both locality metrics and load $\mathrm{CV}$ in the same
direction: normalized tree distance moves from $0.718/0.710/0.718$ (remarkably
tight across three unrelated tasks) to $0.742/0.736/0.749$ --- an almost
constant $+0.024$--$0.031$ degradation --- while index distance moves from
$0.312/0.292/0.301$ to $0.334/0.322/0.324$ and load $\mathrm{CV}$ widens from
$0.093/0.227/0.177$ to $0.112/0.231/0.219$ (Covertype / HIGGS / MSD). Within
Covertype alone the per-metric shifts do not clear $95\%$ at five seeds (paired
$t = +1.20, +2.03, +2.36$ against $2.776$), so no single dataset resolves the
effect, but the sign agrees across all nine (dataset $\times$ metric) pairs.
$h = 8$ remains far below the $0.817$ random baseline on every UCI dataset, so
locality is weakened rather than lost --- a regression the theory permits,
since its balance bound covers only linear maps.
On Criteo the balance shift is larger and resolved: \branch{}'s load
$\mathrm{CV}$ rises from $0.115$ with the linear map to $0.21$--$0.38$ with
MLP maps (Welch $p \le 0.01$ at $h = 4$ and $h = 8$), and the share of
dispatches it drops at capacity rises from $0.08\%$ to $2$--$7\%$; locality
cannot loosen there, because the linear map shows none (\cref{sec:criteo}).
Anchoring forces the arrival-weighted \emph{mean} logit at each node to zero,
and that holds for any node map. Balance, however, is a statement about the
\emph{median}. For a linear map the two are bridged by Grünbaum's inequality:
the decision region is a halfspace through the arrival-weighted centroid, so on
a log-concave incoming law each side of the boundary receives at least
$1/\mathrm{e}$ of the arriving traffic, and each child at least
$1/(2\mathrm{e})$ of the parent's soft routing mass
(\cref{cor:linear-soft-anticollapse}). For an MLP the region $\{\mathbf{x} : f_n(\mathbf{x}) \ge \nu_n\}$
is a curved, possibly disconnected superlevel set and no such bound applies: a
node can hold its mean logit at exactly zero while sending most of its traffic
one way. A balance and locality regression is therefore \emph{permitted} by the
MLP in a way it is not by the linear map.

\section{Extended Limitations}
\label{sec:app-limitations}

\paragraph{Scope of the theoretical guarantees.}
Our analysis is conditional. The anti-collapse and expert-optimization
guarantees require log-concave node-conditional distributions and sufficiently
accurate anchor tracking, and the optimization result additionally assumes a
frozen router with strongly convex conditional expert risks; they therefore do
not establish end-to-end convergence of joint router--expert training. The
locality guarantees likewise hold for a fixed placement and are statements
about dispatch patterns rather than about end-to-end throughput.

Two of these conditions are visibly load-bearing rather than technical. First,
the Grünbaum argument that converts a zero mean logit into a bounded mass split
applies only to halfspace decision regions, so it lapses entirely for the MLP
node maps of \cref{sec:nodemap} --- which are also the maps that perform best
on two of the three UCI datasets and give \branch{} its lowest mean logloss on
Criteo. The theory therefore
does not cover the configuration we report as strongest. Second, log-concavity
is assumed of the distribution arriving at each node, a quantity that is itself
shaped by training, and we do not verify it empirically.

\paragraph{The headline task-quality margins are not resolved.}
\branch{} is the best point estimate on two of four benchmarks, but neither win
is clean at five seeds. On Criteo, \branch{} with an $h = 4$ node map leads
Skywork by $7.5\times10^{-5}$ (Welch $p = 0.63$), about a third of a seed
standard deviation, and with the linear node map shared by all routers it
ranks third of the four learned routers, which span only $1.9\times10^{-4}$. On
Forest Covertype the margin over Switch at $h = 8$ is resolved by a paired
$t$-test ($t_4 = -2.86$) but not by the unpaired Welch test, and the critical
value is $2.776$: the conclusion depends on the choice of test. The same
margin over Skywork is resolved by neither test (paired $t_4 = -1.64$, Welch
$t = -2.01$). On HIGGS and
YearPredictionMSD \branch{} is second and fourth respectively, within a
statistical tie of the leaders. The defensible reading of the suite is parity
with the strongest baselines, and we have written
\cref{sec:results} and \cref{sec:criteo,sec:covtype,sec:se-wins} accordingly; a superiority claim on
task quality would require substantially more seeds.

\paragraph{The locality result does not replicate on Criteo.}
On all three UCI benchmarks \branch{} is the only router whose co-routing
statistics fall below the random baseline, by $8$--$13\%$ on tree distance.
On Criteo it shows no such locality: with the linear node map its tree
distance ($0.813$) is indistinguishable from random ($0.817$, $p = 0.48$),
although an untrained \branch{} scores $0.725$. Criteo differs from the UCI
benchmarks in input representation, batch size, and the number of passes over
the data, and we did not vary these independently. We therefore cannot
presently attribute the discrepancy, and until it is attributed the locality
claim should be read as holding on the three UCI benchmarks rather than
universally. The flat-router negative control is also imperfect on Criteo:
with MLP node maps Skywork reaches sub-random tree distances
($0.75$--$0.77$), which we have not diagnosed and provisionally attribute to
an index-order artifact in expert selection rather than to locality.

\paragraph{The MLP node map loosens the properties that motivate the method.}
Across all three UCI datasets, moving \branch{} from a linear node map to
$h = 8$ shifts load $\mathrm{CV}$ ($0.093/0.227/0.177 \to 0.112/0.231/0.219$),
normalized tree distance ($0.718/0.710/0.718 \to 0.742/0.736/0.749$), and index
distance ($0.312/0.292/0.301 \to 0.334/0.322/0.324$) in the same direction.
Within Covertype alone the per-metric shifts do not clear $95\%$ at five seeds
(paired $t = +1.20, +2.03, +2.36$ against $2.776$), so no single dataset
resolves the effect, but the sign agrees in all nine (dataset $\times$ metric)
cells. On Criteo the balance shift is resolved (load $\mathrm{CV}$
$0.115 \to 0.268$ at $h = 4$, Welch $p = 0.005$). That is a direction the
theory permits once Grünbaum's inequality
no longer applies, and it means the configuration with the lowest validation
loss is also the one where the mechanism's guarantees are weakest.

\paragraph{Effect sizes and statistical power.}
On Criteo the dynamic range across the five routing mechanisms is $0.23\%$
relative logloss with the linear node map ($0.25\%$ with each router at its
best node map), almost all of it the gap to the hash control; the four learned
routers span only $0.04\%$. Differences of this magnitude are consequential in
production click-through-rate systems, but they leave little headroom for
distinguishing methods: at five seeds per arm, logloss differences below
${\approx}3.7 \times 10^{-4}$ ($0.08\%$) are not resolvable, and because TPU
execution is not bitwise reproducible, pairing by seed cannot recover power
there. Only the gap to the hash control is unambiguous (Welch $p \le 0.001$
against each learned router with the linear node map). On Forest Covertype
the four hash arms --- four runs of one identical configuration, since
\textsc{HashRouter} ignores the node map --- isolate run-to-run nondeterminism
at $\mathrm{sd} = 0.00053$; the \branch{}-over-Switch margin is well above this
floor yet unresolved without pairing, which is why pairing is required to see
it at all.

\paragraph{Scope of the empirical study.}
We evaluate four tabular benchmarks with a single mixture-of-experts block at
$E = 16$ and top-$4$ routing. The sweep crosses routing algorithm, node map,
and random seed at a single expert count and capacity factor, and no
per-method hyperparameter search was performed --- each baseline runs at
its published defaults. The ranking should therefore be read as holding at
these operating points rather than uniformly. In particular, $h = 8$ is the
largest node-map width tested and on Forest Covertype the improvement is still
monotone in $h$, so the node-map factor is not saturated there; and neither
top-$K$ nor the expert count was varied. All experiments are tabular and single-block: we make no claim about
language models, multi-block architectures where routing decisions couple
across depth, or the substantially larger expert counts at which production MoE
systems operate.

\paragraph{No wall-clock or FLOP claims.}
\branch{} evaluates all $E - 1$ internal node maps for every token, against $E$
expert scores for a flat gate, so the tree shortens the sequential depth of
routing to $\log_2 E$ levels without materially reducing routing work. Our
implementation also dispatches by a per-expert gather/scatter loop that is
unrolled over $E$ in the computation graph, so graph size grows linearly with
the expert count. We therefore make no wall-clock or FLOP claims. The
communication-locality bounds are likewise
statements about dispatch patterns under an assumed placement, not measured
interconnect traffic.

\paragraph{Future work.}
Three items follow directly from the limitations above. First, re-run the
UCI node-map comparison with enough seeds to settle whether the MLP map
degrades balance and locality there (on Criteo its effect on balance is
already resolved), since that determines whether the method's best
configuration is also the one that forfeits its guarantees. Second, attribute
why the locality seen on the UCI benchmarks does not appear on Criteo, for
instance by varying batch size and training length there independently.
Third, extend the analysis to a deep multilayer residual
formulation in which routing trees compose across depth --- jointly evolving
representations, layerwise anchors, coupling between routes at successive
layers, and end-to-end optimization in that architecture all remain open.
Evaluating at a scale where conditional computation is known to pay for itself
would additionally place the routing comparison in a regime where the model
class is not itself the limiting factor.
\fi

\section{Description of Datasets}
\label{sec:datasets}

We evaluate the considered methods on four widely used tabular benchmarks spanning binary classification, multiclass classification, and regression. These datasets vary substantially in size, feature type, and prediction task.

\paragraph{Criteo.}
The Criteo 1TB Click Logs dataset is a large-scale benchmark for click-through-rate (CTR) prediction. Each instance corresponds to a display advertisement served by Criteo, and the binary label indicates whether the ad was clicked. The training data consist of 24 chronologically ordered daily files and contain 13 numerical, predominantly count-based features and 26 anonymized categorical features. The categorical values are hashed into 32-bit integers, and some entries may be missing. We use this dataset as a large-scale binary classification task \citep{criteo_1tb_click_logs}.

\paragraph{Forest Covertype.}
The Forest Covertype dataset consists of $581{,}012$ observations of $30 \times 30$ meter forest patches from the Roosevelt National Forest in Colorado. The task is to predict the dominant forest cover type among seven classes from 54 cartographic variables, including elevation, slope, aspect, hillshade measurements, wilderness-area indicators, and soil-type indicators. It therefore provides a multiclass classification benchmark with a mix of continuous and binary features \citep{blackard1999covertype,blackard1998covertype_uci}.

\paragraph{HIGGS.}
The HIGGS dataset is a binary classification dataset derived from simulated high-energy particle-collision events. Its goal is to distinguish events corresponding to a signal process that produces Higgs bosons from background events. The dataset contains $11$ million instances, each represented by 28 real-valued features: 21 low-level kinematic variables measured by particle detectors and seven high-level features derived from the low-level variables. This dataset is a standard large-scale benchmark for binary classification \citep{baldi2014searching,whiteson2014higgs_uci}.

\paragraph{YearPredictionMSD.}
The YearPredictionMSD dataset is a regression benchmark constructed from the Million Song Dataset. The task is to predict a song's release year from 90 audio features that summarize timbre characteristics. It contains $515{,}345$ songs, primarily Western commercial recordings released between 1922 and 2011. We use it to assess performance on a large-scale continuous prediction problem \citep{bertinmahieux2011million,bertinmahieux2011yearpredictionmsd_uci}.
\end{document}